\def\eqref#1{equation~\ref{#1}}
\def\1{\bm{1}}
\DeclareMathAlphabet{\mathsfit}{\encodingdefault}{\sfdefault}{m}{sl}
\SetMathAlphabet{\mathsfit}{bold}{\encodingdefault}{\sfdefault}{bx}{n}
\newtheorem{proposition}{Proposition}
\title{LIHE: Linguistic Instance-Split Hyperbolic-Euclidean Framework for Generalized Weakly-Supervised Referring Expression Comprehension}
\author{%
  Xianglong Shi$^{1}$\thanks{Equal contribution.}, Silin Cheng$^{2}$, Sirui Zhao$^{1}$, Yunhan Jiang$^{4}$,\\
  \textbf{Enhong Chen$^{1}$, Yang Liu$^{3}$\thanks{Corresponding author.}, Sebastien Ourselin$^{3}$} \\
  $^{1}$University of Science and Technology of China, 
  $^{2}$The University of Hong Kong  \\
  $^{3}$King’s College London, $^{4}$Peking University\\
  \texttt{xlshi@mail.ustc.edu.cn}, \texttt{hnslcheng@connect.hku.hk},\texttt{yang.9.liu@kcl.ac.uk}\\
}
\begin{document}

\maketitle
\begin{abstract}

Existing Weakly-Supervised Referring Expression Comprehension (WREC) methods, while effective, are fundamentally limited by a one-to-one mapping assumption, hindering their ability to handle expressions corresponding to zero or multiple targets in realistic scenarios. To bridge this gap, we introduce the Weakly-Supervised Generalized Referring Expression Comprehension task (WGREC), a more practical paradigm that handles expressions with variable numbers of referents. However, extending WREC to WGREC presents two fundamental challenges: supervisory signal ambiguity, where weak image-level supervision is insufficient for training a model to infer the correct number and identity of referents, and semantic representation collapse, where standard Euclidean similarity forces hierarchically-related concepts into non-discriminative clusters, blurring categorical boundaries. To tackle these challenges, we propose a novel WGREC framework named Linguistic Instance-Split Hyperbolic-Euclidean (LIHE), which operates in two stages. The first stage, Referential Decoupling, predicts the number of target objects and decomposes the complex expression into simpler sub-expressions. The second stage, Referent Grounding, then localizes these sub-expressions using HEMix, our innovative hybrid similarity module that synergistically combines the precise alignment capabilities of Euclidean proximity with the hierarchical modeling strengths of hyperbolic geometry. This hybrid approach effectively prevents semantic collapse while preserving fine-grained distinctions between related concepts. Extensive experiments demonstrate LIHE establishes the first effective weakly supervised WGREC baseline on gRefCOCO and Ref-ZOM, while HEMix achieves consistent improvements on standard REC benchmarks, improving IoU@0.5 by up to 2.5\%. The code is available at \url{https://anonymous.4open.science/r/LIHE}.
\end{abstract}


\section{Introduction}

Referring Expression Comprehension (REC)~\citep{mao2016generation,yu2016modeling}, also known as visual grounding, aims to localize objects in an image based on natural language expressions. REC has shown broad application potential in fields such as robotic navigation and image editing. However, existing REC methods heavily rely on instance-level annotations, which are expensive and labor-intensive to collect~\citep{zhu2022seqtr,deng2023transvg}. To alleviate this bottleneck, Weakly-Supervised REC (WREC) has emerged as a cost-effective alternative. WREC methods eliminate the need for bounding box supervision and instead learn to align vision and language features using only image–text pairs. Early approaches~\citep{gupta2020contrastive, liu2019adaptive} typically employed two-stage pipelines, while recent single-stage frameworks~\citep{jin2023refclip,luo2024apl,cheng2025weakmcn} have become dominant due to their superior efficiency, directly matching text with anchor features from pre-trained detectors through contrastive learning~\citep{oord2018representation}.

 However, real-world scenarios are often more complex: a referring expression might correspond to multiple objects or no object at all, known as Generalized Referring Expression Comprehension (GREC)~\citep{he2023grec,liu2023gres,wu20243d}, which extends REC by allowing expressions to describe no-target and multi-target cases (shown as Fig.~\ref{fig:example}). Accordingly, in this paper, we aim to solve a new task, Weakly-Supervised Generalized Referring Expression Comprehension (WGREC).

Extending WREC to the WGREC setting poses two fundamental challenges that need to be addressed to develop effective weakly supervised methods. The first is the ambiguity of the supervisory signal, where the winner-takes-all mechanism of previous WREC methods is structurally incapable of locating all relevant targets, erroneously returning only a single instance, as shown in Fig.~\ref{fig:example}. The second is the semantic representation collapse, which arises because conventional contrastive learning methods rely on Euclidean similarity. This assumes flat, one-to-one alignments, leading to suboptimal representations in multi-referent scenarios. For instance, an expression like ``left person'' may refer to both ``left man'' and ``left woman'' as shown in Fig.~\ref{heratical}. Pulling both specific instances toward the same general anchor (“person”) in Euclidean space unintentionally forces ``man'' and ``woman'' closer together, blurring their categorical boundaries.


\begin{figure*}
    \centering
    \vspace{-4mm}
    \includegraphics[width=1.0\linewidth]{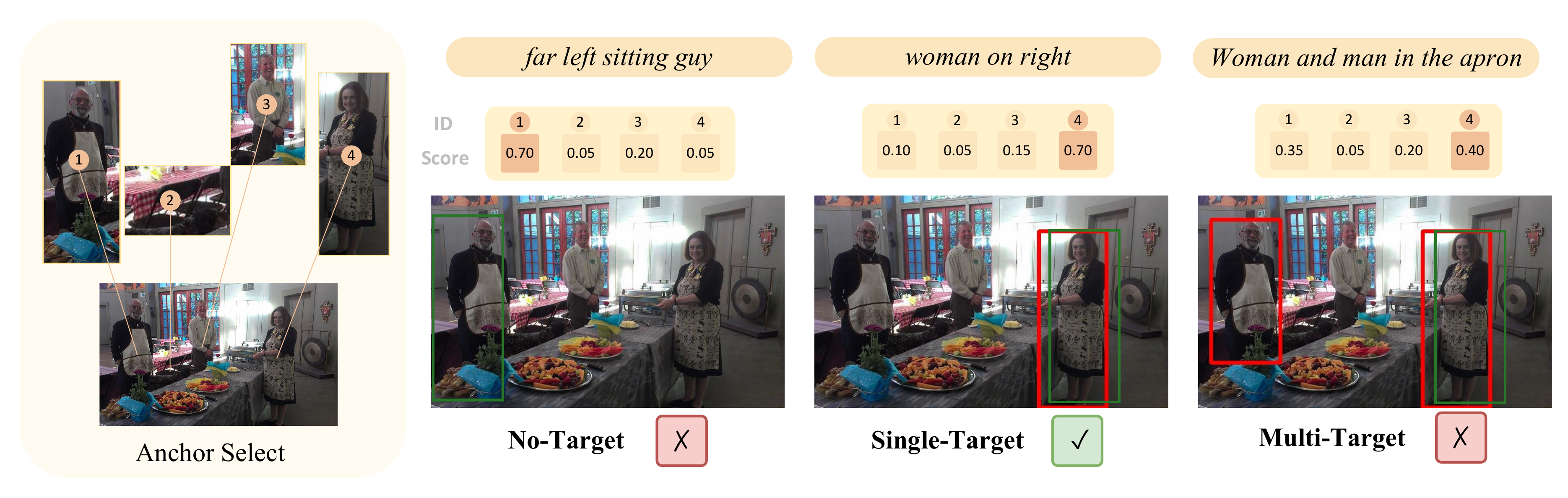}
    \caption{Limitations of current WREC methods. The ground truth is denoted by red bounding boxes, whereas green bounding boxes denote the predictions. Current WREC methods always select only the best anchor as output, failing to handle No-target and Multi-target cases (e.g., no red bounding box and two red bounding boxes).
    }
    \label{fig:example}
    \vspace{-5mm}
\end{figure*}

To address these challenges, we propose a two-stage \textbf{L}inguistic \textbf{I}nstance-Split \textbf{H}yperbolic-\textbf{E}uclidean (LIHE) framework designed to effectively localize a variable number of objects from a single expression. First, in the \textbf{Referential Decoupling} stage, LIHE leverage a vision-language model (VLM)~\citep{Qwen-VL,Qwen2VL,qwen2.5-VL} to infer the number of referring objects: if zero, it will skip the second stage; otherwise, it will decompose the expression into corresponding sub-expressions for each object. Second, the \textbf{Referent Grounding} stage employs a contrastive learning paradigm to train the model to localize each sub-expression. To prevent the semantic collapse inherent in this grounding process, LIHE integrates \textit{HEMix}, a hybrid similarity module that replaces the standard Euclidean metric. HEMix synergizes the fine-grained alignment of Euclidean proximity with the structure-preserving properties of hyperbolic geometry~\citep{ganea2018hyperbolic,hier,bdeir2024fully}. It leverages the inherent capacity of Lorentzian hyperbolic space to model hierarchies by placing general concepts near the origin and specific instances outward, thereby preserving semantic distinctions. This design, which serves as a \textit{plug-and-play} module, effectively prevents concept collapse and enhances generalization in complex referring scenarios with negligible computational overhead.

We conduct extensive experiments on the GREC datasets gRefCOCO~\citep{he2023grec} and Ref-ZOM~\citep{hu2023beyond}, demonstrating that our method is the first weakly supervised framework to tackle the WGREC task effectively. Additionally, we validate the broader applicability of HEMix on REC benchmarks, including RefCOCO~\citep{nagaraja2016modeling}, RefCOCO+~\citep{nagaraja2016modeling}, and RefCOCOg~\citep{mao2016generation}, and achieve the state-of-the-art performance. Our findings highlight the potential of combining REC with structured geometry to advance vision-language understanding.

\vspace{-2mm}

\section{Task Definition}

\vspace{-3mm}


The Weakly-Supervised Generalized Referring Expression Comprehension (WGREC) task aims to localize all image regions described by a given natural language expression using only weak supervision. Formally, given an image $I$ and an expression $T$, the goal is to predict a set of bounding boxes, $\mathcal{B}^* = \{b_i \mid i \in k\}$, where each box $b_i$ corresponds to a region in $I$ that matches the expression $T$. and the number of targets $k$ is variable (zero, one, or more). Critically, no bounding box annotations are used during training. Each sample consists only of an image--text pair $(I, T)$, without knowing which regions match the expression. Unlike WREC, which assumes every expression refers to a specific object, WGREC allows for expressions that have no matching region. To address this, we introduce a binary label $v \in \{0,1\}$ for the training set, where $v = 0$ means no region in the image matches the expression, and $v = 1$ otherwise. This provides weak supervision to guide the model in learning to predict all matching regions. Our goal is to train a model that, using only these weak labels, can predict the complete set of referent boxes for inference.

\vspace{1em} 

\vspace{-3mm}
\section{Method}
\vspace{-2mm}

Extending WREC to WGREC is non-trivial and typically faces two major challenges: \textbf{(1) Cardinality Ambiguity}:  WREC methods such as RefCLIP simplify the task by reducing it to an anchor–text matching problem. Specifically, they select the most relevant anchor from a predefined set $\mathcal{A}$ using:
\begin{equation}
a^* = \arg\max_{a \in \mathcal{A}} \phi(T, I, a),
\label{RECTask}
\end{equation}
where $\phi(T, I, a)$ denotes the similarity between the expression $T$, the image $I$, and anchor $a$. However, this max-selection strategy inherently assumes a single referent, making it unsuitable for WGREC, where the number of targets is unknown. \textbf{(2) Hierarchical Representation
Collapse}: When a general expression $T$ (e.g., ``person'') refers to multiple distinct sub-categories (e.g., a ``man'' and a ``woman'') as shown in Fig.~\ref{heratical}, conventional contrastive learning in Euclidean space can conflate their representations. This blurs categorical boundaries and leads to a loss of semantic distinction.

To address these limitations, we propose \textbf{L}inguistic \textbf{I}nstance-Split \textbf{H}yperbolic-\textbf{E}uclidean (LIHE), a framework designed for WGREC. As shown in Fig.~\ref{fig:network}, LIHE consists of a \textit{Referential Decoupling} stage, which decomposes complex expressions into single-instance queries, and a \textit{Referent Grounding} stage, which detects all matching regions. In addition, we introduce a \textit{HEMix} similarity to explicitly preserve hierarchical relationships.

\vspace{-1mm}
\subsection{Referential Decoupling}
\vspace{-1mm}
In the context of WGREC, referring expressions often correspond to multiple visual entities. To address this, we reformulate the task by decomposing a multi-target expression into a set of single-target sub-expressions, allowing each referent to be localized independently and utilize the capabilities of VLM~\citep{qwen2.5-VL} to judge whether the target exists. This strategy simplifies the multi-instance grounding problem and makes it more tractable under weak supervision.

We leverage the perceptual capabilities of large vision-language models (VLMs)~\citep{qwen2.5-VL}, which, although do not have grounding functions, exhibit strong visual understanding and language reasoning. Given an image $I$, a referring expression $T$, and a carefully designed prompt $P$, we input them into a VLM to obtain a set of simplified, instance-level expressions.
To exploit the in-context learning capability of the VLM, we design a four-part prompt paired with the input image to guide the VLM in decomposing the original referring expression:
(1) General instruction $\mathbf{P_G}$, describing the goal of splitting the expression into target-specific phrases;
(2) Output constraints $\mathbf{P_C}$, specifying the format of each phrase;
(3) In-context examples $\mathbf{P_E}$, providing annotated demonstrations to steer the VLM toward the desired behavior;
(4) Input query $\mathbf{P_Q}$, which contains original referring expression together with explicit instructions;
(5) Raw Image $\mathbf{I}$.
Decomposition is formulated as
\begin{equation}
K,\mathcal{T}_{\text{D}} = \text{VLM}(\mathbf{P_G},\, \mathbf{P_C},\, \mathbf{P_E},\, \mathbf{P_Q},\, \mathbf{I}),
\end{equation}
where $K$ is the number of targets and $\mathcal{T}_{\text{D}} = \{ t_1, t_2, \dots, t_k \}$ is the set of sub-expressions generated by the model, each describing a distinct visual entity potentially present in the image. The prompts $\mathbf{P}$ guide the model to rewrite the original expression into concise, non-overlapping descriptions of individual targets, using brief instructions and a few examples. This decomposition mitigates issues such as irrelevance and ambiguity. 
Meanwhile, the prompt component $\mathbf{P_C}$ explicitly instructs the VLM to first output the number of target phrases \(K\) before listing them. 
This constraint helps mitigate common hallucination issues in VLMs, such as generating duplicate referring expressions for the same visual entity. 
When the VLM returns ($K = 0$), we interpret it as a no-target case, which naturally fits the open-ended setting of WGREC. More detailed prompt design, please see Appendix~\ref{Appendix:DetailedPromptDesign}.

\begin{figure*}
    \centering
    \vspace{-6mm}
    \includegraphics[width=1.0\linewidth]{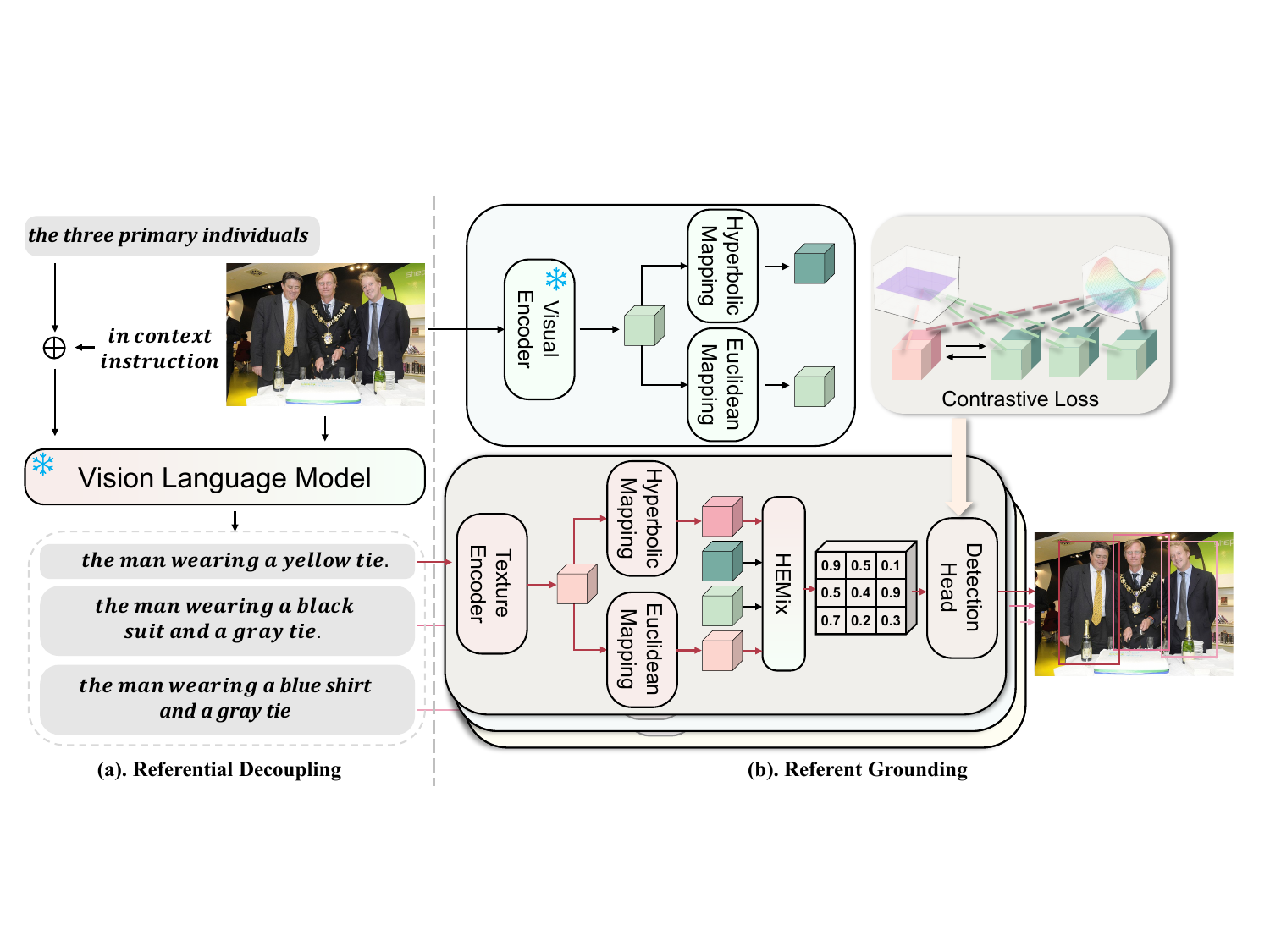}
    \caption{\textbf{The overall framework of LIHE}. (a). Referential Decoupling: VLM decomposed the referring expression into distinct short phrases for each target. (b). Referent Grounding: Each phrase is processed by a textual encoder, and the image by a visual encoder. Then the model filters anchors of low value and returns the best-matching one for bounding box prediction. The referent grounding stage is weakly supervised by the anchor-based contrastive loss.}
    \label{fig:network}
    \vspace{-5mm}
\end{figure*}

\subsection{Referent Grounding}
After the decoupling stage, each decomposed referring expression \(t\) corresponds to a matched visual entity in the image \(I\). Consequently, the task reduces to a conventional WREC problem, as defined in Eq.~\ref{RECTask}. Therefore, the referent grounding stage of our framework follows the structure and training strategy of previous WREC methods~\citep{jin2023refclip,luo2024apl,cheng2025weakmcn}, which have proven effective for WREC. Specifically, given an input image and a referring expression (the short phrases generated from the referential decoupling stage), the model uses a one-stage detector to extract visual feature maps, from which anchor features are obtained.
We retain only the anchors from the last feature map layer—based on the assumption that most target objects in referring datasets are of moderate to large size and further filter them by confidence, typically keeping the top 10\% of anchors.
Each remaining anchor is projected into a joint semantic space, alongside the corresponding text embedding. The similarity between each anchor and the phrase is then computed using a hybrid similarity metric, which combines Euclidean and Hyperbolic similarity scores. 
Formally, we adopt the contrastive learning objective of RefCLIP~\citep{jin2023refclip}, replacing its similarity function with our HEMix:
\begin{equation}
\label{constrastive}
\mathcal{L}_c = - \log \frac{\exp\big( \text{HEMix}(f_{a_0}^i, f_t^i)/\tau \big)}{
\sum\limits_{n=0}^{N} \sum\limits_{j=0}^{M} \mathbb{I}_{\neg (i=j \wedge n \neq 0)} \exp\big( \text{HEMix}(f_{a_n}^j, f_t^i)/\tau \big)}
\end{equation}
where $f_{a_0}^i$ is the positive anchor for $i$-th image, $f_t^i$ is the text embedding, $\tau$ is a temperature scalar, and HEMix denotes our proposed Euclidean-Hyperbolic hybrid similarity. This contrastive objective aligns the correct anchor-text pair while using both intra- and inter-image anchors as negatives. More details of HEMix formulation are provided in Sec.~\ref{sec:hybrid-sim}. Note that we filter the training dataset for the referent grounding stage by retaining only samples with validity flag \(v=1\) (i.e., at least one entity in \(I\) matches \(T\)). 

\subsection{HEMix}
\label{sec:hybrid-sim}

In previous WREC methods~\citep{jin2023refclip,luo2024apl,cheng2025weakmcn}, contrastive learning is typically driven by Euclidean similarity in a shared embedding space. However, this approach is limited in its ability to capture hierarchical semantics. For instance, as illustrated in Fig.~\ref{heratical}, a phrase like \textit{`left person'} may refer to both \textit{`left man'} and \textit{`left woman'} two visual entities in the image, which are semantically related but visually distinct. Euclidean similarity tends to cluster all such instances together, resulting in ambiguous localization.
\begin{figure*}
    \centering
    \setlength{\abovecaptionskip}{6pt}   
    \setlength{\belowcaptionskip}{1pt}  
    \includegraphics[width=1.0\linewidth]{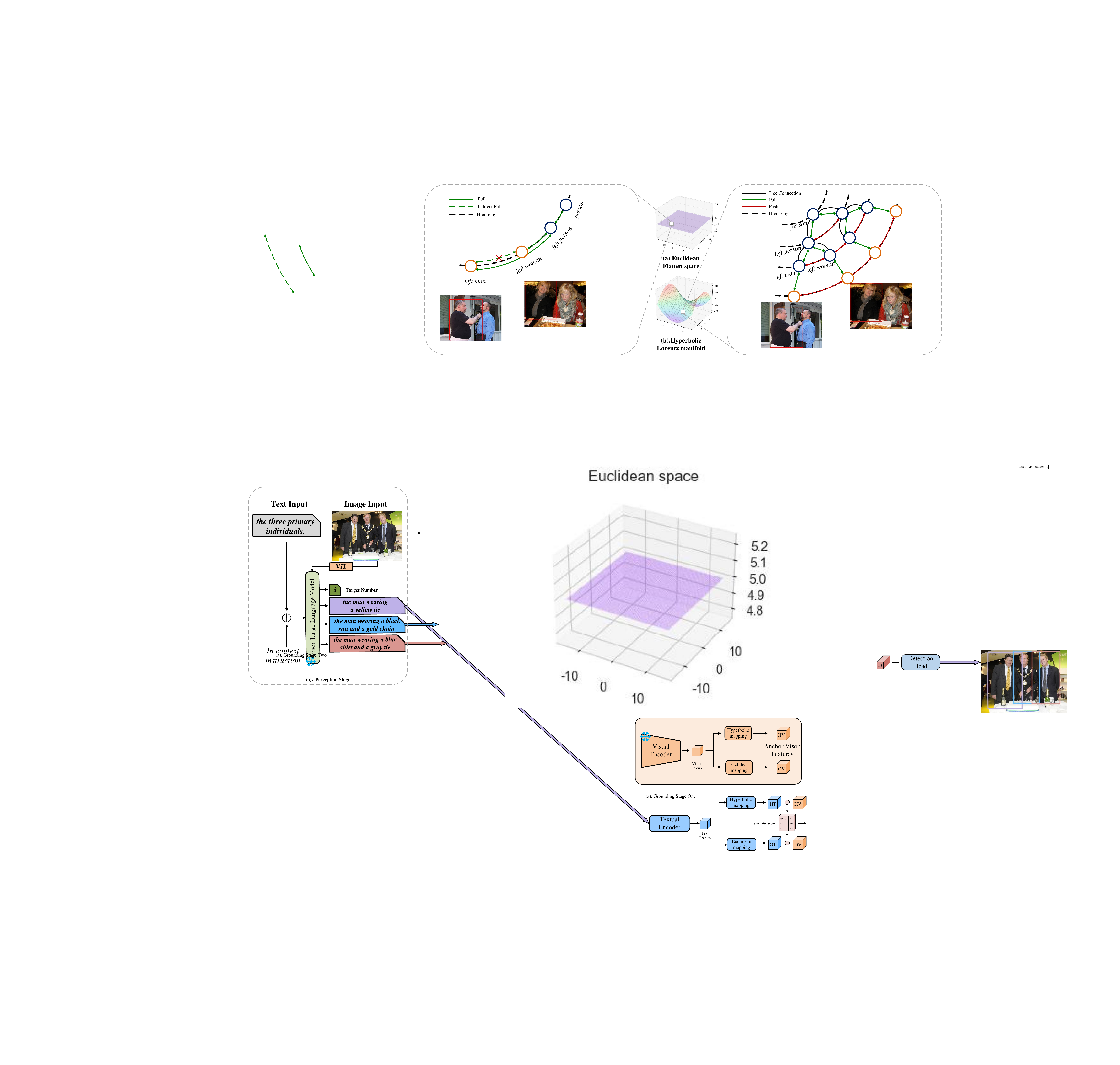}
    \caption{A simple illustration of (a) Euclidean flatten space and (b) hyperbolic Lorentz manifold in 3-dimensional space~\citep{li2024hyperbolic}. In Euclidean space, all nodes occupy a single, undifferentiated hierarchy, so parent and child entities share the same geometric scale.
    In contrast, the negative curvature of hyperbolic space naturally organizes nodes into concentric hierarchies: parent nodes reside closer to the manifold’s apex, while their children are pushed farther outward, and different children at the same level are repelled from one another.}
    \label{heratical}
\end{figure*}
To better model semantic structure, we incorporate hyperbolic geometry using the Lorentz (hyperboloid) model. Hyperbolic spaces naturally embed hierarchical relationships: pulling a parent concept (e.g., \textit{`person'}) closer to the time axis increases angular separation among its children (e.g., \textit{`man'} and \textit{`woman'}), effectively preserving both generality and specificity. This aligns well with the hierarchical nature of referring expressions. More introduction and insights are in Appendix~\ref{Appendix:Hyperbolic Space}. 

For $D$-Dimension visual feature $f_v \in \mathbb{R}^{D}$ and text feature $f_t \in \mathbb{R}^{D}$, we use two types of similarity:

(1) \textbf{Euclidean similarity}: Same as refclip~\citep{jin2023refclip}, the similarity is calculated by
\begin{equation}
\text{Sim}_{\mathrm{E}}(f_v, f_t) = \langle f_v \mathbf{W}_{EV},\; f_t \mathbf{W}_{ET} \rangle,
\end{equation}
where $\mathbf{W}_{EV}$ and $\mathbf{W}_{ET}$ are learnable linear mapping matrices, and $\langle \cdot, \cdot \rangle$ denotes the standard inner product in Euclidean space.

(2) \textbf{Hyperbolic similarity (Lorentz model)}: We first map features into hyperbolic space as spatial components of \(\tilde{f}_{v}, \tilde{f}_{t}\) in hyperbolic space:
\begin{equation}
\mathbf{z}_v = f_v \mathbf{W}_{HV} \in \mathbb{R}^{D}, \quad \mathbf{z}_t = f_t \mathbf{W}_{HT} \in \mathbb{R}^{D},
\end{equation}
where $\mathbf{W}_{HV}$ and $\mathbf{W}_{HT}$ are learnable linear projection matrices for hyperbolic space.
Here, instead of using exponential maps to embed into hyperbolic space in the paper before, which is prone to unstable gradients, we adopt a learnable linear projection for both branches. This ensures smooth training and better compatibility with contrastive objectives. 
In a hyperbolic space of curvature \(\kappa\), the calculation of hyperbolic similarity by the Lorentzian inner product is formulated as: 
\begin{equation}
\text{Sim}_{\mathrm{H}}(f_v, f_t) = \langle \tilde{f}_v, \tilde{f}_t \rangle_{\mathbb{H}} = -x_0^v x_0^t + \langle \mathbf{z}_v, \mathbf{z}_t \rangle,
\end{equation}
where the mapped feature vectors \(\tilde{f}_v = (x_0^v, \mathbf{z}_v) \in \mathbb{R}^{D+1},\quad
\tilde{f}_t = (x_0^t, \mathbf{z}_t) \in \mathbb{R}^{D+1}\) and the time components
\(x_0 = \smash[t]{\sqrt{\lVert \mathbf{z}\rVert^{2} + \kappa^{-1}}}\).
Due to the time component \(x_0\) being calculated from the spatial
component \(\mathbf{z}\), the feature vectors \(\tilde{f}\) satisfy
\(\langle \tilde{f}, \tilde{f} \rangle_{\mathbb{H}} = -\kappa^{-1}\).
Thus \(\tilde{f}\) represents a valid point in
\(
\mathbb{H}^n_\kappa = \{\mathbf{x} \in \mathbb{R}^{n+1} \mid
\langle \mathbf{x}, \mathbf{x} \rangle_\mathbb{H} = -\kappa^{-1},\; x_0 > 0\}
\)
and inherits the geometric properties of hyperbolic space.









(3) \textbf{HEMix similarity}: We define the final similarity as a weighted combination of the two:
\begin{equation}
\text{HEMix}(f_v, f_t) = (1 - \alpha) \, \text{Sim}_{\mathrm{E}}(f_v, f_t) + \alpha \, \text{Sim}_{\mathrm{H}}(f_v, f_t), \quad \alpha \in (0, 1).
\end{equation}
Euclidean space ($\kappa\!=\!0$ case) excels at \emph{local, flat} geometry; it preserves fine‑grained angular relationships that are crucial for pixel‑accurate localization.  
Lorentzian Hyperbolic space ($\kappa\!<\!0$ case) naturally embeds \emph{hierarchical} or long‑range semantics because geodesic distance grows exponentially~\citep{hyplorentz,ganea2018hyperbolic}, but its metric stretches neighborhoods close to the light cone, which can blur local details.  
Each geometry similarity (\(\text{Sim}_{\mathrm{E}}\) and \(\text{Sim}_{\mathrm{H}}\)) therefore introduces a different \emph{estimation error} with respect to the ideal but unknown similarity $\text{Sim}^{\star}$:
\[
\text{Sim}_{\mathrm{E}}=\text{Sim}^{\star}+b_{\mathrm{E}}+\varepsilon_{\mathrm{E}},\quad
\text{Sim}_{\mathrm{H}}=\text{Sim}^{\star}+b_{\mathrm{H}}+\varepsilon_{\mathrm{H}},
\]
where $b =\mathbb E[\text{Sim} -\text{Sim}^{\star}]$ denotes the \emph{bias} and $\varepsilon$ the zero‑mean random deviation. Specifically, $b_{\mathrm{E}}$ is large for hierarchical descriptions, while $b_{\mathrm{H}}$ is large for micro‑spatial references. The two errors are not perfectly correlated, due to the characteristics of each space.

\begin{proposition}[Variance reduction]
\label{prop:var-reduce}
Let $\sigma^{2}_{\mathrm{E}}\!=\!\operatorname{Var}[\varepsilon_{\mathrm{E}}]$,
$\sigma^{2}_{\mathrm{H}}\!=\!\operatorname{Var}[\varepsilon_{\mathrm{H}}]$
and $\rho\!=\!\operatorname{Corr}[\varepsilon_{\mathrm{E}},\varepsilon_{\mathrm{H}}]$.
If $\rho<1$, the mean‑squared error of the hybrid estimator
\begin{align*}
\operatorname{MSE}(\text{HEMix})
&=\mathbb E\bigl[(\text{HEMix}-\text{Sim}^{\star})^{2}\bigr]\\
&=((1-\alpha)b_{\mathrm{E}}+\alpha b_{\mathrm{H}})^{2}
\quad +(1-\alpha)^{2}\sigma_{\mathrm{E}}^{2}+\alpha^{2}\sigma_{\mathrm{H}}^{2}+2\alpha(1-\alpha)\rho\sigma_{\mathrm{E}}\sigma_{\mathrm{H}},
\end{align*}
\text{attains its minimum at}
$\displaystyle
\alpha^{\star}=\frac{(\sigma_{\mathrm{E}}^{2}+\rho\sigma_{\mathrm{E}}\sigma_{\mathrm{H}})+b_{\mathrm{E}}(b_{\mathrm{E}}-b_{\mathrm{H}})}{\sigma_{\mathrm{E}}^{2}+\sigma_{\mathrm{H}}^{2}-2\rho\sigma_{\mathrm{E}}\sigma_{\mathrm{H}}+(b_{\mathrm{E}}-b_{\mathrm{H}})^{2}},
$
\text{and make} $\operatorname{MSE}(\text{HEMix})$ \text{satisfies}
\begin{equation}
    \operatorname{MSE}(\text{HEMix};\alpha^{\star})<\min\bigl\{\operatorname{MSE}(\text{Sim}_{\mathrm{E}}),\,\operatorname{MSE}(\text{Sim}_{\mathrm{H}})\bigr\}.
\end{equation}
\end{proposition}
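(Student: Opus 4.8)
The plan is to treat \(\text{HEMix}\) as a convex combination of two noisy estimators and carry out a standard bias--variance decomposition. First I would subtract \(\text{Sim}^{\star}\) from the definition \(\text{HEMix}=(1-\alpha)\text{Sim}_{\mathrm{E}}+\alpha\text{Sim}_{\mathrm{H}}\) and substitute the two error models, giving \(\text{HEMix}-\text{Sim}^{\star}=\bigl((1-\alpha)b_{\mathrm{E}}+\alpha b_{\mathrm{H}}\bigr)+\bigl((1-\alpha)\varepsilon_{\mathrm{E}}+\alpha\varepsilon_{\mathrm{H}}\bigr)\). The first bracket is deterministic and the second is zero-mean, so the cross term drops out when squaring and taking expectations, leaving \(\operatorname{MSE}=\bigl((1-\alpha)b_{\mathrm{E}}+\alpha b_{\mathrm{H}}\bigr)^{2}+\operatorname{Var}\bigl[(1-\alpha)\varepsilon_{\mathrm{E}}+\alpha\varepsilon_{\mathrm{H}}\bigr]\). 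Expanding the variance with \(\operatorname{Cov}[\varepsilon_{\mathrm{E}},\varepsilon_{\mathrm{H}}]=\rho\sigma_{\mathrm{E}}\sigma_{\mathrm{H}}\) reproduces the displayed formula directly; this step is routine.

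Next I would view \(\operatorname{MSE}(\alpha)\) as a scalar quadratic in \(\alpha\) and minimize it. Its second derivative equals \(2\bigl[\sigma_{\mathrm{E}}^{2}+\sigma_{\mathrm{H}}^{2}-2\rho\sigma_{\mathrm{E}}\sigma_{\mathrm{H}}+(b_{\mathrm{E}}-b_{\mathrm{H}})^{2}\bigr]\), and since \(\sigma_{\mathrm{E}}^{2}+\sigma_{\mathrm{H}}^{2}-2\rho\sigma_{\mathrm{E}}\sigma_{\mathrm{H}}\ge 2(1-\rho)\sigma_{\mathrm{E}}\sigma_{\mathrm{H}}>0\) whenever \(\rho<1\) and the variances are positive, the objective is strictly convex and admits a unique global minimizer. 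Setting the first derivative to zero yields a single linear equation in \(\alpha\); solving it produces \(\alpha^{\star}\). The only delicate point here is bookkeeping the sign of the covariance cross-term \(2\alpha(1-\alpha)\rho\sigma_{\mathrm{E}}\sigma_{\mathrm{H}}\) when differentiating, which controls the sign of the \(\rho\sigma_{\mathrm{E}}\sigma_{\mathrm{H}}\) contribution appearing in the numerator of \(\alpha^{\star}\), so I would carry out that differentiation with care.

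The substantive step is the strict inequality. Using strict convexity I would write \(\operatorname{MSE}(\alpha)=\operatorname{MSE}(\alpha^{\star})+\tfrac{1}{2}C(\alpha-\alpha^{\star})^{2}\) with \(C>0\) the constant second derivative, and identify the two endpoints \(\operatorname{MSE}(\text{Sim}_{\mathrm{E}})=\operatorname{MSE}(0)\) and \(\operatorname{MSE}(\text{Sim}_{\mathrm{H}})=\operatorname{MSE}(1)\). Then \(\operatorname{MSE}(0)-\operatorname{MSE}(\alpha^{\star})=\tfrac{1}{2}C(\alpha^{\star})^{2}\) and \(\operatorname{MSE}(1)-\operatorname{MSE}(\alpha^{\star})=\tfrac{1}{2}C(1-\alpha^{\star})^{2}\), both strictly positive unless \(\alpha^{\star}\) equals \(0\) or \(1\) respectively. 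I expect the main obstacle to be precisely ruling out these degenerate boundary cases: the strict inequality genuinely fails when one geometry is already optimal on its own (for instance \(\alpha^{\star}=0\) when the numerator of \(\alpha^{\star}\) vanishes), so the claim as stated implicitly requires the mild non-degeneracy \(\alpha^{\star}\notin\{0,1\}\) alongside \(\rho<1\). I would state this assumption explicitly, after which the conclusion \(\operatorname{MSE}(\text{HEMix};\alpha^{\star})<\min\{\operatorname{MSE}(\text{Sim}_{\mathrm{E}}),\operatorname{MSE}(\text{Sim}_{\mathrm{H}})\}\) follows immediately from the two displayed gaps being strictly positive.
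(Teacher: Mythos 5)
Your strategy is the same as the paper's: expand the bias--variance decomposition, treat $\operatorname{MSE}(\alpha)$ as a quadratic $A\alpha^{2}+2B\alpha+C$ with $A=\sigma_{\mathrm{E}}^{2}+\sigma_{\mathrm{H}}^{2}-2\rho\sigma_{\mathrm{E}}\sigma_{\mathrm{H}}+(b_{\mathrm{E}}-b_{\mathrm{H}})^{2}>0$ when $\rho<1$, and compare the interior minimum with the endpoints $\alpha=0,1$. The difference is exactly at the step you flagged, and your caution is vindicated: the paper's own proof asserts $f(\alpha^{\star})=C-B^{2}/A<f(0)$ ``because $B^{2}/A>0$'' (silently assuming $B\neq0$), and then argues that convexity gives $f(\alpha^{\star})<\max\{f(0),f(1)\}$, hence $f(\alpha^{\star})<\min\{f(0),f(1)\}$ whenever $f(0)\neq f(1)$ --- a non sequitur that never excludes $\alpha^{\star}=1$ (or $\alpha^{\star}=0$). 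Your formulation $f(\alpha)=f(\alpha^{\star})+A(\alpha-\alpha^{\star})^{2}$, with endpoint gaps $A(\alpha^{\star})^{2}$ and $A(1-\alpha^{\star})^{2}$, makes it transparent that strictness is \emph{equivalent} to $\alpha^{\star}\notin\{0,1\}$, and you are right that this is a genuine extra hypothesis rather than a consequence of $\rho<1$: taking $b_{\mathrm{E}}=b_{\mathrm{H}}=0$ and $\sigma_{\mathrm{E}}=\rho\sigma_{\mathrm{H}}$ with $0<\rho<1$ gives $\alpha^{\star}=0$ and $\operatorname{MSE}(\text{HEMix};\alpha^{\star})=\operatorname{MSE}(\text{Sim}_{\mathrm{E}})$, so the proposition as literally stated (and the appendix claim that $\alpha^{\star}\in(0,1)$ whenever $\rho<1$ or $b_{\mathrm{E}}\neq b_{\mathrm{H}}$) fails in such degenerate cases. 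Adding the non-degeneracy assumption explicitly, as you propose, is the correct fix.

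One further point: the sign bookkeeping you deferred actually changes the displayed formula. Carrying out the differentiation gives the minimizer numerator $\sigma_{\mathrm{E}}^{2}-\rho\sigma_{\mathrm{E}}\sigma_{\mathrm{H}}+b_{\mathrm{E}}(b_{\mathrm{E}}-b_{\mathrm{H}})$, i.e.\ the proposition's stated $\alpha^{\star}$ carries the wrong sign on the $\rho\sigma_{\mathrm{E}}\sigma_{\mathrm{H}}$ term (the classical $b_{\mathrm{E}}=b_{\mathrm{H}}=0$ case reduces to the familiar optimal combination weight $(\sigma_{\mathrm{E}}^{2}-\rho\sigma_{\mathrm{E}}\sigma_{\mathrm{H}})/(\sigma_{\mathrm{E}}^{2}+\sigma_{\mathrm{H}}^{2}-2\rho\sigma_{\mathrm{E}}\sigma_{\mathrm{H}})$, which confirms the minus sign). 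The paper's proof compounds this with a slip in its linear coefficient, writing $B=-[(b_{\mathrm{H}}-b_{\mathrm{E}})b_{\mathrm{E}}+\sigma_{\mathrm{H}}^{2}-\rho\sigma_{\mathrm{E}}\sigma_{\mathrm{H}}]$ where the correct value is $B=b_{\mathrm{E}}(b_{\mathrm{H}}-b_{\mathrm{E}})-\sigma_{\mathrm{E}}^{2}+\rho\sigma_{\mathrm{E}}\sigma_{\mathrm{H}}$. So if you complete the differentiation you promised, your proof --- with the corrected $\alpha^{\star}$ and the explicit assumption $\alpha^{\star}\notin\{0,1\}$ --- ends up more rigorous than the published argument, while following the identical overall route.
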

Please see the supplementary material for complete proof. Under the common assumption~\citep{huang2021towards,gouk2021regularisation,lei2023generalization} that the contrastive loss \(\mathcal{L}\!\bigl(\sigma(S)\bigr)\) is Lipschitz-continuous in its similarity argument \(S\), a lower MSE implies a tighter generalization bound, which translates into higher retrieval or localization accuracy. Due to \(\rho\) is unknown in practice, we select \(\alpha\) via experiment. Overall, HEMix unifies two complementary geometric inductive biases, local Euclidean precision and hyperbolic hierarchy, into a single estimator that better approximates \(\text{Sim}^{\star}\), as confirmed empirically in Sec.~\ref{Experiments}.

\vspace{-2mm}
\section{Experiments}
\label{Experiments}
\vspace{-1mm}
\subsection{Datasets and Metrics}
\vspace{-2mm}

We evaluate our proposed method on two benchmark datasets for the WGREC task: gRefCOCO~\citep{he2023grec} and Ref-ZOM~\citep{wang2025hierarchical}, both of which support expressions that may correspond to multiple or zero referents. Following prior works~\citep{he2023grec}, we adopt Precision@(F$_1$=1, IoU$\ge$0.5) and N-acc. as the main evaluation metrics. 
Precision@(F$_1$=1, IoU$\ge$0.5) computes the percentage of samples that have the F1 score of 1 with the IoU threshold set to 0.5 and N-acc. assesses the model’s proficiency in no-target identification.
Detailed explanations are provided in the Appendix~\ref{appendx:metrics}. 
To further validate the effectiveness and generalization ability of our proposed HEMix module, we also evaluate it on three widely-used WREC datasets: RefCOCO, RefCOCO+, and RefCOCOg. For these datasets, we follow the standard evaluation protocol using IoU@0.5, where a prediction is considered correct if the IoU between the predicted and ground-truth bounding box exceeds 0.5.

\begin{table}[t]
\setlength{\abovecaptionskip}{6pt}
\centering
\footnotesize
\caption{Comparison on the WGREC task. `$\dagger$' denotes these methods have been modified to generate multiple boxes following \citet{he2023grec}. 
`$^*$' denotes the model adapted for the WGREC task.}
\resizebox{\textwidth}{!}{
\setlength{\tabcolsep}{1.5mm} 
{\renewcommand{\arraystretch}{1.2}
\begin{tabular}{l|c|cc|cc|cc|cc}
\hline
\multirow{3}{*}{\textbf{Methods}} & \multirow{3}{*}{\textbf{Supervision}}  & \multicolumn{6}{c|}{\textbf{gRefCOCO}}& \multicolumn{2}{c}{\textbf{Ref-ZOM}} \\
\cline{3-8} \cline{9-10}
 &  &  \multicolumn{2}{c|}{val} & \multicolumn{2}{c|}{testA} & \multicolumn{2}{c|}{testB} & \multicolumn{2}{c}{val} \\
 & & Pr (\%) & N-acc.(\%) & Pr(\%) & N-acc.(\%) & Pr(\%) & N-acc.(\%) & Pr(\%) & N-acc.(\%) \\
\hline
MCN$^{\dag}$~\citep{luo2020multi}           & Fully & 28.0 & 30.6 & 32.3 & 32.0 & 26.8 & 30.3 & - & - \\
VLT$^{\dag}$~\citep{ding2021vision}           & Fully & 36.6 & 35.2 & 40.2 & 34.1 & 30.2 & 32.5 & - & -\\
MDETR$^{\dag}$~\citep{kamath2021mdetr}        & Fully & 42.7 & 36.3 & 50.0 & 34.5 & 36.5 & 31.0 & - & -\\
UNITEXT$^{\dag}$~\citep{yan2023universal}       & Fully  & 58.2 & 50.6 & 46.4 & 49.3 & 42.9 & 48.2 & - & -\\
Ferret-7B$^{*}$~\citep{you2023ferret}          & Fully & 54.8 & 48.9 & 49.5 & 45.2 & 43.5 & 43.8 & - & -\\
VistaLLM-7B~\citep{pramanick2024jack}          & Fully & 52.7 & 69.4 & - & - & - & - & - & -\\
VistaLLM-13B~\citep{pramanick2024jack}          & Fully  & 54.6 & 70.8 & - & - & - & - & - & -\\
RECANTFormer(5)~\citep{hemanthage2024recantformer} & Fully  & 57.73 &  52.70 & 57.82 & 53.38 & 49.49 & 54.53 & 56.69 &  88.24\\
RECANTFormer(10)~\citep{hemanthage2024recantformer}  & Fully &  55.10 & 52.73 & 55.07 & 53.07 & 48.01 & 54.81 & 59.78 &88.24\\
HieA2G$_\textrm{R101}$~\citep{wang2025hierarchical}  & Fully  & 67.8 & 60.3 & 66.0 & 60.1 & 56.5 & 56.0 & - & -\\
\hline
RefCLIP$^{*}$~\citep{jin2023refclip}           & Weakly &  17.85 & 0.0 & 18.23 & 0.0 & 21.89 & 0.0 & 35.78 & 0.0 \\
\rowcolor{gray!15}
\textbf{LIHE} & \textbf{Weakly}  & \textbf{39.61} & \textbf{67.49} & \textbf{32.70} & \textbf{79.60} & \textbf{35.84} & \textbf{67.07} & \textbf{50.36} & \textbf{97.70} \\
\hline
\end{tabular}}}

\label{GREC}
\end{table}

\vspace{-1mm}
\subsection{Comparisons with State-of-the-art Methods}
\vspace{-2mm}
\paragraph{WGREC results.} 
As shown in Tab.~\ref{GREC}, LIHE achieves strong performance on the gRefCOCO dataset under the weakly supervised setting. 
Compared to other WREC baselines, RefCLIP~\citep{jin2023refclip}, our model significantly outperforms it across all splits. For instance, on the validation set, our method achieves 39.61\% grounding precision and 67.49\% normalized accuracy, while RefCLIP only obtains 17.85\% and lacks the capability to handle no-target cases. 
This performance gap clearly demonstrates that methods assuming a single target fail to generalize to WGREC, which involves multi-target and no-target scenarios. 
Although fully supervised methods like HieA2G~\citep{wang2025hierarchical} and RECANTFormer~\citep{hemanthage2024recantformer} outperform our model in absolute metrics, our method remains competitive despite using only image-level supervision, and even surpasses earlier fully supervised baselines such as MCN~\citep{luo2020multi}, VLT~\citep{ding2021vision}, and MDETR~\citep{kamath2021mdetr}. 
Additionally, it is worth mentioning that our method can even accomplish unsupervised GREC tasks,
as shown in Appendix~\ref{Unsupervised Schema}.
\vspace{-3mm}

\begin{table}[t]
\centering
\footnotesize
\setlength{\tabcolsep}{2mm}
\caption{Performance of our methods on RefCOCO, RefCOCO+ and RefCOCOg datasets. `*' indicates results reproduced under identical settings.}
\resizebox{0.9\textwidth}{!}{
\renewcommand{\arraystretch}{1.2}
\begin{tabular}{l|c|ccc|ccc|c}
\hline
\multirow{2}{*}{\textbf{Method}} & 
\multirow{2}{*}{\textbf{Published on}} & 
\multicolumn{3}{c|}{\textbf{RefCOCO}} & 
\multicolumn{3}{c|}{\textbf{RefCOCO+}} & 
\multirow{1}{*}{\textbf{RefCOCOg}} \\
& & val & testA & testB & val & testA & testB & val \\
\hline
VC~\citep{niu2019variational}           & \textit{CVPR '18}    & -     & 32.68 & 27.22 & -     & 34.68 & 28.10 & 29.65 \\
ARN~\citep{liu2019adaptive}             & \textit{ICCV '19}    & 32.17 & 35.25 & 30.28 & 32.78 & 34.35 & 32.13 & 33.09 \\
IGN~\citep{zhang2020counterfactual}     & \textit{NeurIPS '20} & 34.78 & -     & -     & 36.91 & 36.91 & 35.46 & 34.92 \\
DTWREG~\citep{sun2021discriminative}    & \textit{TPAMI '21}   & 38.35 & 39.51 & 37.01 & 38.19 & 39.91 & 37.09 & 42.54 \\
\hline
RefCLIP*~\citep{jin2023refclip}   & \textit{CVPR '23}    & 59.88 & 58.44 & 56.91 & 40.11 & 40.01 & 38.63 & 47.87 \\
\textbf{RefCLIP*+HEMix} & - & \textbf{60.95} & \textbf{59.84} & \textbf{58.57} & \textbf{41.48} & \textbf{42.54}& \textbf{39.37} & \textbf{48.67} \\
\hline
APL*~\citep{luo2024apl}    & \textit{ECCV '24}    & 64.18 &  61.06 & 63.08 & 41.03 & 41.46 & 38.72 & 49.45 \\
\textbf{APL*+HEMix}  & -  & \textbf{65.71} & \textbf{62.67} & \textbf{64.04} & \textbf{42.13} & \textbf{42.98} & \textbf{40.70} & \textbf{50.88} \\
\hline
WeakMCN~\citep{cheng2025weakmcn}    & \textit{CVPR '25}    & 69.20 &  69.88 & 62.63 & 51.90 & 57.33 & 43.10 & 54.62 \\
\textbf{WeakMCN*+HEMix}  & -  & \textbf{70.44} & \textbf{71.59} & \textbf{63.22} & \textbf{52.61} & \textbf{58.14} & \textbf{44.43} & \textbf{55.60} \\
\hline
\end{tabular}
\label{tab:wrec-results}
}
\end{table}

\paragraph{WREC results.} 
We further evaluate the generalization ability of our hybrid similarity learning scheme on standard WREC benchmarks, including RefCOCO, RefCOCO+, and RefCOCOg. 
As shown in Tab.~\ref{tab:wrec-results}, integrating our proposed hybrid similarity module consistently improves existing baselines. 
For example, RefCLIP+HEMix surpasses vanilla RefCLIP on all splits (e.g., +1.71\% on RefCOCO testA, +1.15\% on RefCOCO+testA). 
Similarly, APL+HEMix yields consistent gains over APL~\citep{luo2024apl}, improving RefCOCOg by 1.43\% and RefCOCO+ testB by 1.98\% and WeakMCN+HEMix also improves the performance on three datasets. 
These results highlight the compatibility and plug-and-play nature of our hybrid similarity formulation. 
By combining Euclidean and Hyperbolic similarity measures, the model benefits from both local discriminative alignment and global semantic consistency, leading to better grounding performance across varying datasets and expression types.
\vspace{-2mm}
\subsection{Ablation Study}

\begin{table}[t]
\noindent
\footnotesize
\renewcommand{\arraystretch}{1.2}
\makebox[\textwidth][l]{ 

\begin{minipage}[t]{0.5\textwidth}

\caption{Performance of different similarity methods in contrastive loss.}
\centering
\setlength{\tabcolsep}{1.2mm}
\resizebox{\linewidth}{!}{
\begin{tabular}{c|cc|ccc}
\hline
\multirow{2}{*}{\textbf{Similarity Method}} &\multicolumn{2}{c|}{\textbf{RefCOCO (WREC)}} &\multicolumn{3}{c}{\textbf{gRefCOCO (WGREC)}}\\

& testA & testB &val &testA &testB \\
\hline
$\mathrm{Sim}_\mathrm{E}$& 58.44  & 56.92 & 38.88	& 31.77	 &	34.89 \\
$\mathrm{Sim}_\mathrm{H}$& 58.94  & 57.72  & 39.58	 & 31.57  &	 36.88\\

 HEMix     & \textbf{59.84}  & \textbf{58.57} & \textbf{39.73}  & \textbf{32.19}  &  \textbf{37.22}\\

\hline
\end{tabular}
}
\label{ablation:sim}
\end{minipage}

\hspace{0.05\textwidth}




\begin{minipage}[t]{0.35\textwidth}
\centering
\renewcommand{\arraystretch}{1}
\setlength{\tabcolsep}{1.2mm}
\caption{Ablation study on the prompt design with N-acc. metrics}

\resizebox{0.95\linewidth}{!}{
\begin{tabular}{c|ccc}
\toprule
\multirow{2}{*}{\textbf{Prompt design}} & \multicolumn{3}{c}{\textbf{gRefCOCO (WGREC)}} \\
              & val & testA & testB \\
\midrule
Without $P_E$ & 49.00 & 65.60 & 50.93 \\
$P^*$         & 68.10 & 76.66 & 68.26 \\
$P$           & 67.49 & 79.60 & 67.07 \\
\bottomrule
\end{tabular}
}
\label{ablation:prompt}
\end{minipage}
\vspace{-5mm}

}
\end{table}
\begin{table}[t]
\centering
\footnotesize
\caption{Cross-Dataset validation from GREC to REC benchmarks. In-Dataset denotes the model trained on the same dataset as the test.}
\vspace{-3mm}
\resizebox{0.75 \linewidth}{!}{
\renewcommand{\arraystretch}{1.2}
\begin{tabular}{l|l|ccc|ccc|c}
\hline
\multirow{2}{*}{\textbf{Method}} & 
\multirow{2}{*}{\textbf{Training Set}} & 
\multicolumn{3}{c|}{\textbf{RefCOCO}} & 
\multicolumn{3}{c|}{\textbf{RefCOCO+}} & 
\multirow{1}{*}{\textbf{RefCOCOg}} \\
& & val & testA & testB & val & testA & testB & val \\
\hline
RefCLIP & \multirow{2}{*}{In-Dataset} &59.88 &58.44 &56.91 &40.11 &40.01 &38.63 &47.87 \\
Ours &            & \textbf{60.95} & \textbf{59.84} & \textbf{58.57} & 41.48 & 42.54 & \textbf{39.37}  & 48.67     \\
\hline
RefCLIP    & \multirow{2}{*}{gRefCOCO}       & 47.62 & 46.83 & 48.53 & 38.25 & 38.91 & 38.00 & 43.73 \\
Ours    & & 54.34 & 55.79 & 51.38 & \textbf{42.04} & \textbf{43.22} & 38.84 & \textbf{49.14} \\
\hline
\end{tabular}
}
\label{ablation3}
\end{table}
\vspace{-2mm}
\begin{table}
\caption{Performance of different hyperbolic mapping methods on RefCOCO and WGREC.}
\vspace{-3mm}
\centering
\renewcommand{\arraystretch}{1.2}

\resizebox{0.75 \linewidth}{!}{
\begin{tabular}{c|ccc|ccc}
\hline
\multirow{2}{*}{Hyperbolic Mapping Method} & \multicolumn{3}{c|}{RefCOCO} & \multicolumn{3}{c}{WGREC}\\
& val & testA & testB & val & testA & testB\\
\hline
Exponential Map    & 55.02 & 54.16 & 53.78 & 38.65 & 31.44 & 34.37 \\
Learnable Linear   & \textbf{60.95} & \textbf{59.84} & \textbf{58.57} & \textbf{39.61} & \textbf{32.70} & \textbf{35.84} \\
\hline
\end{tabular}
}
\vspace{-3mm}
\label{ablation:mapall}
\end{table}

To validate the effectiveness of individual components in our framework, we conduct comprehensive ablation studies covering similarity modules, mapping strategies and cross-dataset validation. 
\vspace{-3mm}
\paragraph{Effect of Similarity Design in Contrastive Learning.}
We conduct an ablation study to assess the impact of different similarity functions in contrastive learning, as shown in Tab.~\ref{ablation:sim}. Overall, our proposed HEMix consistently outperforms standard Euclidean similarity ($\mathrm{Sim}_\mathrm{E}$, equivalent to Referent Grounding directly using RefCLIP), with average gains of 1.53\% on RefCOCO (WREC) and 0.90\% on gRefCOCO (WGREC), demonstrating its effectiveness across both tasks. When comparing $\mathrm{Sim}\mathrm{H}$ and $\mathrm{Sim}\mathrm{E}$, the improvement is more pronounced on WGREC than on WREC (+0.83\% vs. +0.65\%), and notably, $\mathrm{Sim}_\mathrm{H}$ achieves performance close to HEMix on WGREC, with only a 0.44\% gap on average. This suggests that hyperbolic similarity better supports the grounding of multiple semantically related referents, which is common in WGREC. One likely reason is that expressions in WGREC often correspond to multiple related but distinct instances, requiring a more structured representation space, an advantage naturally offered by hyperbolic geometry. These findings further confirm the suitability of hyperbolic space for modeling hierarchical semantics. Nevertheless, HEMix consistently outperforms both $\mathrm{Sim}\mathrm{E}$ and $\mathrm{Sim}\mathrm{H}$ across all datasets. Even in cases where $\mathrm{Sim}\mathrm{E}$ and $\mathrm{Sim}\mathrm{H}$ perform similarly (e.g., RefCOCO testA, gRefCOCO testA), HEMix achieves great improvements. This highlights the importance of incorporating both fine-grained and hierarchical information and demonstrates the overall superiority of our proposed HEMix.
\vspace{-3mm}

\paragraph{Ablation Study on Prompt design}
We conduct an ablation study on prompt design and use N-acc as metrics. The prompt $P$ contains $P_G$, $P_C$, $P_E$, $P_Q$ and $I$. Among them, $P_G$, $P_C$, $P_Q$, and $I$ are indispensable; the absence of any one would lead to unparseable VLM output and consequently task failure. Therefore, we present two variants: one with $P_E$ removed, and another, $P^*$, where the textual content is altered while preserving the semantic information of $P$.Removing $P_E$ led to some outputs not following the format well and reduced the model's understanding of the task, which in turn led to performance degradation. The performance of $P^*$ when changing the text content was similar to $P$, indicating the robustness of our method to prompts.
\vspace{-3mm}

\paragraph{Effect of Different Hyperbolic Mapping Designs.}
Prior works~\citep{hypfs,khrulkov2020hyperbolic,hypmeru}
typically adopt the exponential map to project features onto the hyperboloid manifold, preserving the mathematical correctness of hyperbolic embeddings. However, we observe that this mapping introduces steep gradients, which negatively affect training stability and optimization. As shown in Tab.~\ref{ablation:mapall}, replacing the exponential map with a learnable linear layer yields a substantial improvement of 5.24\% on RefCOCO and 1.23\% on gRefCOCO, indicating that a simpler, trainable mapping leads to better empirical performance in practice. To our best knowledge, only these two hyperbolic mapping schemes exist.
\vspace{-3mm}
\paragraph{Cross-Dataset Validation.}
To evaluate the robustness and generalization of our framework, we train the model on the gRefCOCO and test it on standard WREC benchmarks with single-target annotations. As shown in Tab.~\ref{ablation3}, our method consistently outperforms the single-target baseline RefCLIP across all test sets, despite being trained on the same gRefCOCO data. Compared to in-domain training, the cross-dataset setting yields even larger gains (e.g., +5.41\% $\textit{vs.}$ +0.80\% in RefCOCOg), further highlighting the superior transferability of our approach. Notably, our model even surpasses in-domain RefCLIP models on RefCOCO+ and RefCOCOg, demonstrating strong generalization across both task settings and dataset domains.

Furthermore, our more ablation study in Appendix~\ref{Appendix:More Ablation Studies} reveals several consistent trends across datasets. First, sweeping the hybrid weight $\alpha$ for HEMix produces a clear U-shaped curve with a robust sweet spot around $\alpha\!\in\![0.4,0.7]$; for example, RefCOCO testA peaks at $60.01\%$ when $\alpha{=}0.7$, while gRefCOCO testB reaches $36.44\%$ at $\alpha{=}0.9$ (Tab.~\ref{alphaab}). Second, adding an \emph{explicit} hierarchical constraint brings little to no average improvement over the \emph{implicit} structure already captured by HEMix (Tab.~\ref{hierarchicalconstrain}). Third, during referent grounding, excluding $v{=}0$ samples avoids degenerate contrastive updates and improves performance, while such cases are handled at inference by the referential decoupling stage, outputting ``0'' (Tab.~\ref{tab:refclip_v0}). 

\vspace{-4mm}
\subsection{Quantitative Analysis}
\vspace{-3mm}

\begin{figure}
    \centering
    \vspace{-4mm}
    \includegraphics[width=\linewidth]{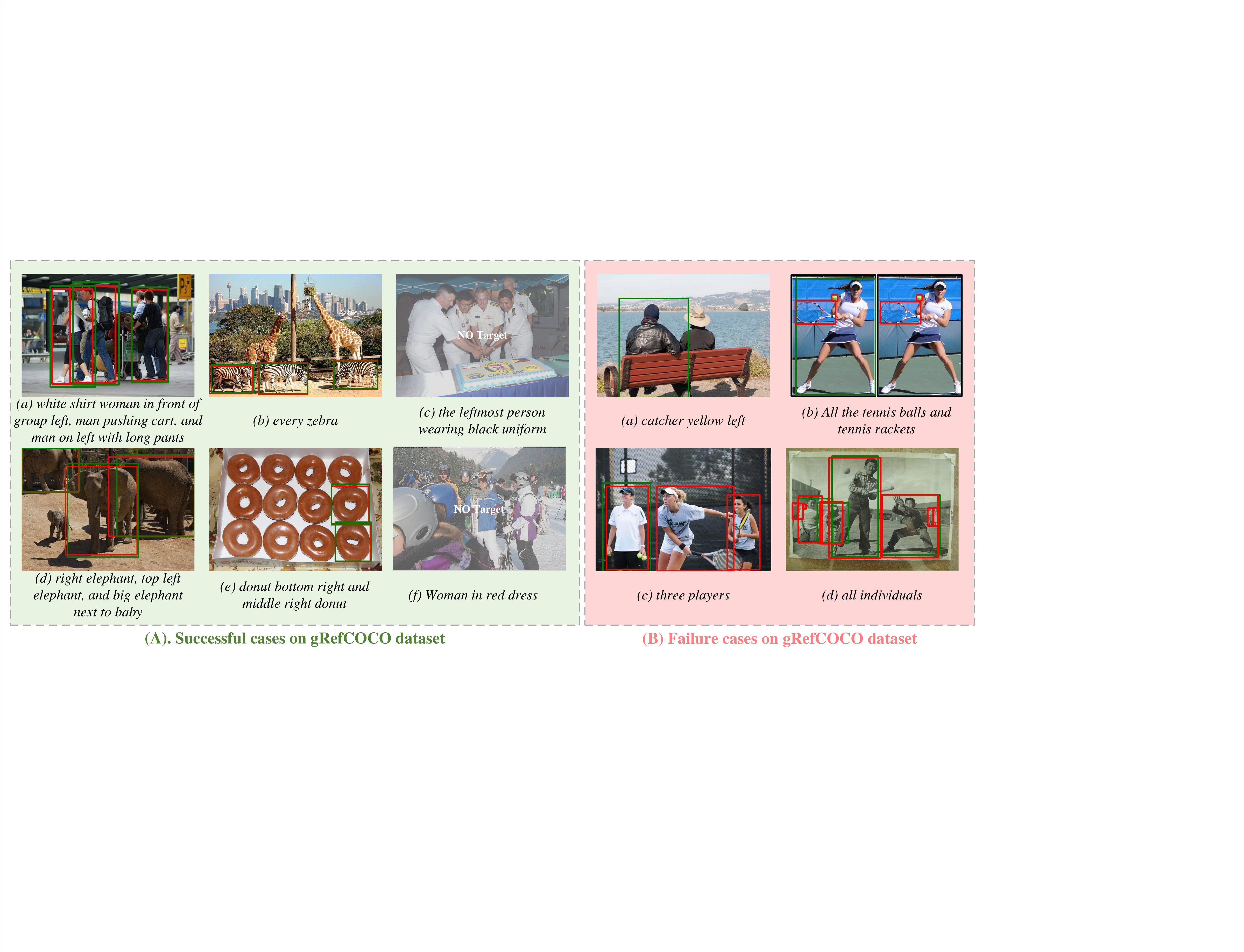}
    \caption{Successful cases(green background) and failure cases(red background). The ground truth is denoted by red bounding boxes, whereas green bounding boxes denote the predictions.}
    \label{Quantitative}
    \vspace{-4mm}
\end{figure}
To further understand the behavior of our model in complex referring scenarios, we visualize representative success and failure cases from the gRefCOCO dataset in Fig.~\ref{Quantitative}. These examples highlight the strengths and current limitations of our framework.
\vspace{-3mm}

\paragraph{Success Cases.} As shown in Fig.~\ref{Quantitative}(A), our method demonstrates strong grounding ability across both multi-target and no-target expressions. For instance, in case (a), the expression includes multiple entities with spatial and appearance constraints, and our model is able to localize each person correctly despite heavy occlusion and crowd density. In (e), although the objects (donuts) are visually similar, our model grounds the correct ones by leveraging position cues. Moreover, our approach can successfully identify no-target cases (c) and (f). In (c), the phrase \textit{``the leftmost person wearing black uniform''} does not correspond to any entity in the image, and our model makes a correct no-target prediction. In (f), while the scene contains many people, none of them match the detailed attributes described in expression (\textit{``woman in red dress''}), and the model again avoids false positives.
\vspace{-3mm}

\paragraph{Failure Cases.} Fig.~\ref{Quantitative}(B) presents typical failure cases. In (a), the expression implies no valid target, yet the model incorrectly grounds an entity, reflecting the limited semantic understanding of the VLM. In (b), the number of targets is predicted correctly, but all decomposed expressions collapse to the same label (``people''), caused by VLM hallucinations. This occurs despite correctly detecting four objects (two balls and two rackets); class imbalance biases the decoder toward the frequent category ``people,'' consistent with prior findings on distribution-induced hallucination~\citep{zhang2024knowledge,mckenna-etal-2023-sources,rohrbach-etal-2018-object,liang2025rbd}. In (c), although the decoupled phrases are semantically valid and distinct, the grounding stage fails to segment precise regions, leading to overlaps and missed detections. In (d), the model misses fine-scale visual cues, such as small hands or blurred individuals, demonstrating difficulty in handling subtle details.

\vspace{-3mm}
\section{Conclusion}
\vspace{-3mm}

In this paper, we delve into the critical limitation of Weakly-Supervised Referring Expression Comprehension (WREC) task: the inability of existing methods to handle expressions corresponding to a variable number of targets. To address this, we introduce the Weakly-Supervised Generalized Referring Expression Comprehension (WGREC) task, a more realistic and challenging setting where an expression may refer to multiple, single, or no objects. We then propose LIHE, the first weakly-supervised framework designed for this generalized task. LIHE operates via a two-stage process: (1) Referential Decoupling, where a vision-language model (VLM) infers the number of potential referents and parses the expression into target-relevant sub-phrases, followed by (2) Referent Grounding, where the model enhanced by our novel hybrid similarity localizes each sub-phrase. This design leverages the semantic understanding of VLMs to resolve ambiguity and synergistically combines Euclidean and hyperbolic geometries to preserve hierarchical representations. Despite these strengths, experiments demonstrate that LIHE achieves state-of-the-art performance on both WGREC and WREC benchmarks, validating its robustness and generalization. One key limitation of LIHE is its reliance on VLMs, which limits inference speed ($\approx$2 FPS). Thus, LIHE is more suitable as a teacher model for generating pseudo-labels~\citep{diffusaliency} to supervise smaller, faster student models. In future work, we will explore student model design and lightweight VLM adaptation, as well as extend our hybrid similarity design to richer geometric formulations.

\section*{Reproducibility Statement}
All theoretical results are established under explicit conditions. Experimental details are given in Appendix~\ref{Appendix:Implementation Details}. The code and weights are available at \url{https://anonymous.4open.science/r/LIHE}.
\section*{Ethics Statement}
This work only uses publicly available datasets and does not involve human subjects or sensitive information. We identify no specific ethical concerns.

\bibliography{iclr2026_conference}
\bibliographystyle{iclr2026_conference}

\cleardoublepage
\appendix
\section*{Appendix}
{\hypersetup{linkcolor=black}
\startcontents[sections]
\printcontents[sections]{l}{1}{\setcounter{tocdepth}{2}}
}
\clearpage
\section{Use of large language models}
We use large language models to aid or polish writing. 

\section{Related Work}
\paragraph{Generalized Referring Expression Comprehension}
GREC~\citep{he2023grec} extends traditional REC tasks by permitting linguistic expressions to refer simultaneously to multiple target objects. Baseline methods such as MCN \citep{luo2020multi}, VLT \citep{ding2021vision}, MDETR \citep{kamath2021mdetr}, and UNINEXT \citep{yan2023universal} have been adopted to evaluate performance under this more complex and realistic scenario. Recent advancements include RECANTFormer \citep{hemanthage2024recantformer}, which employs a recursive transformer decoder with adaptive prediction heads to dynamically predict multiple targets, and HieA2G \citep{wang2025hierarchical}, which leverages a hierarchical alignment mechanism to enhance interactions between linguistic phrases and visual objects, thereby capturing fine-grained semantic correlations. However, these methods uniformly rely on fully supervised bounding box annotations. Weakly supervised approaches for GREC remain unexplored, motivating the development of our proposed weakly supervised framework.
\paragraph{Weakly Supervised REC}
Weakly supervised approaches~\citep{gupta2020contrastive,liu2019adaptive,liu2019kpnet,
liu2021rir,sun2021discriminative,wang2021ckd,zhang2020counterfactual} have recently shown promising results in Referring Expression Comprehension (REC), significantly reducing the dependence on expensive bounding box annotations. Unlike fully supervised methods~\citep{deng2021transvg,ho2022yoro,hong2019ltrs,huang2021look,
kamath2021mdetr,liao2020cmcf,liu2019nmtree,liu2019erase,crossdiff,
luo2020multi,sun2021iterative,yang2019fast,yang2020recursive,
zhang2018vc,zhao2022word2pix,zhou2021ginet,zhou2021trar,
zhu2022seqtr}, weakly supervised REC techniques utilize coarser supervision signals, such as image-level or image-text pair annotations, enabling scalable and cost-effective training. Notable one-stage methods~\citep{jin2023refclip,luo2024apl,zhao2018wa}
include RefCLIP \citep{jin2023refclip}, which redefines REC as an anchor-text matching task using anchor-based contrastive learning to align visual and textual features without bounding boxes, and APL \citep{luo2024apl}, which enriches anchor features through position, color, and category prompts, coupled with auxiliary losses to enhance vision-language alignment. Despite their effectiveness in traditional REC tasks, these methods inherently assume a single-target scenario, limiting their generalizability to multi-target WGREC settings. This highlights the need to develop specialized, weakly supervised approaches tailored specifically to GREC.
\paragraph{Hyperbolic Representation learning }
Hyperbolic representation learning~\citep{hyptxt1s,hyptxt2s,hyplorentz,hyppoincare,hier,hypvit,
hypseg1,hypseg2,hypfs,khrulkov2020hyperbolic,hypmeru} leverages hyperbolic geometry's exponential volume growth and negative curvature, making it particularly effective for modeling hierarchical and relational structures. Early seminal works such as Poincaré Embeddings \citep{nickel2017poincare} and Hyperbolic Neural Networks \citep{ganea2018hyperbolic} established foundational techniques for embedding structured data into hyperbolic spaces. Recent developments have adapted hyperbolic geometry to vision and cross-modal tasks. For instance, hyperbolic embedding methods introduced by Kwon et al.~\citep{kwon2024improving} and Kong et al.~\citep{kong2024hyperbolic} have improved visual recognition and open-world detection by preserving hierarchical semantics. For vision-language alignment, Ge et al. \citep{ge2023hyperbolic} and Ramasinghe et al.~\citep{ramasinghe2024accept} used hyperbolic contrastive learning to enhance semantic coherence. Unlike previous methods that use only hyperbolic distance or angle, we fuse Euclidean and hyperbolic similarities into a hybrid metric, which boosts WREC performance and showcases a new way to apply hyperbolic geometry in representation learning.

\section{Implementation Details}
\label{Appendix:Implementation Details}
In the referential decoupling stage, we adopt a pre-trained VLM~\citep{qwen2.5-VL} to understand the visual entity and generate the decomposed referring expression. In the referent grounding stage, following prior work, we resize every image to $416 \times 416$. The maximum token length of the referring expression is fixed to 15 for all datasets. For anchor extraction, we adopt the YOLOv3~\citep{redmon2018yolov3} model pre‑trained on MSCOCO~\citep{lin2014microsoft}, where images from the validation and test splits of those datasets are removed to avoid leakage. Detector weights are frozen during all stages of training. The language encoder produces 512‑dimensional sentence embeddings.  
Visual anchor features are first fused across scales and then linearly projected to the same 512‑dimensional joint space.  
In anchor‑based contrastive learning, we adopt a 512‑dimensional projection head and sample two negative anchors per image by default.  
All WREC tasks are trained on NVIDIA GPU A100 40G and all WGREC tasks on A6000 48G. All models are optimized with AdamW using a constant learning rate of $1e-4$. We train for 25 epochs with a batch size of 64. 

\section{Hyperbolic Space Properties}
\label{Appendix:Hyperbolic Space}
Hyperbolic spaces are Riemannian manifolds characterized by negative curvature, and they differ fundamentally depending on the curvature value. As the curvature approaches zero, the hyperbolic space gradually transitions into a Euclidean space. When the curvature is negative, the space exhibits hyperbolic geometry, where parallel lines can diverge, and the volume grows exponentially with distance from a point. These spaces are commonly represented using various models, including the Poincaré ball, the Lorentz model, and the Klein model, each providing unique advantages for mathematical formulations.
\paragraph{Lorentz Model} The Lorentz Model is also known as the hyperboloid model or the Minkowski model. In the Lorentz model, a hyperbolic \( n \)-dimensional manifold is commonly realized as a sub-manifold of \( \mathbb{R}^{n+1} \), corresponding to the upper sheet of a two-sheeted hyperboloid. Each point \( \mathbf{x} \in \mathbb{R}^{n+1} \) of the Lorentz model, can be represented as \( [\mathbf{x}_{\text{time}}, x_{\text{space}}] \), where \( x_{\text{time}} \in \mathbb{R} \) denotes the temporal component and \( \mathbf{x}_{\text{space}} \in \mathbb{R}^n \) denotes the spatial components. The \( n \)-dimensional Hyperbolic space \( \mathbb{H}^{n} \) with curvature $\kappa$ represented by a \({n+1}\)-dimensional Lorentz Model as follows:
\begin{equation}
\mathbb{H}^n_\kappa \;=\;\Bigl\{\mathbf{x}\in\mathbb{R}^{n+1}\ \bigl|\ 
\langle\mathbf{x},\mathbf{x}\rangle_\mathbb{H} \;=\; -\kappa^{-1},\; x_0>0\Bigr\}
\label{eq:hyperb}
\end{equation}
where the Lorentzian inner product is defined as,
\begin{equation}
\langle\mathbf{x},\mathbf{y}\rangle_\mathbb{H} \;=\; -x_0y_0 \;+\; \sum_{i=1}^{n} x_i y_i.
\label{eq:lor_inner}
\end{equation}
Here, the 0-th dimension of the vector \(\mathbf{x}\), \(x_0\) is treated as the time component \(x_{time}\) and the rest dimension of the vector \(\mathbf{x}\), \(\mathbf{x}_{1:n}\) is the space component \(\mathbf{x}_{space}\). The \(x_{time}\) can be calculated from \(\mathbf{x}_{space}\)
as follows:
\begin{equation}
x_{time} = x_0 =\sqrt{\|\mathbf{x}_{\mathrm{space}}\|^{2} + \kappa^{-1}}\,.
\label{xtime}
\end{equation}
where the $\| \dotsc \|$ is the Euclidean norm.

\paragraph{Distance}
In the Lorentz model, the geodesic distance between two points 
$\mathbf{x},\mathbf{y}\in\mathbb{H}^{n}_{\kappa}$ ($\kappa>0$) can be expressed
solely in terms of their Lorentzian inner product. The
distance function $d_{\kappa}:\mathbb{H}^{n}_{\kappa}\times
\mathbb{H}^{n}_{\kappa}\to\mathbb{R}_{\ge 0}$ is
\begin{equation}
d_{\kappa}(\mathbf{x},\mathbf{y}) \;=\;
\frac{1}{\sqrt{\kappa}}\,
\operatorname{arccosh}\!\Bigl(
  -\,\kappa\,
  \langle\mathbf{x},\mathbf{y}\rangle_{\mathbb{H}}
\Bigr),
\label{eq:lorentz_distance}
\end{equation}
where $\operatorname{arccosh}$ is the inverse hyperbolic cosine and $\langle\cdot,\cdot\rangle_{\mathbb{H}}$ is lorentz inner product in~\eqref{eq:lor_inner}.  The quantity
inside the $\operatorname{arccosh}$ is always $\ge 1$ for points on the
hyperboloid, guaranteeing that the distance is real‐valued.  This formula
highlights a key feature of hyperbolic geometry: the distance grows
logarithmically with the Lorentzian inner product, reflecting the exponential
volume growth characteristic of negatively curved spaces.

\paragraph{Exponential Map}
Given a base point $\mathbf{p}\in\mathbb{H}^{n}_{\kappa}$ and a tangent
vector $\mathbf{v}\in T_{\mathbf{p}}\mathbb{H}^{n}_{\kappa}$ (Euclidean space), the
\emph{exponential map}
$\operatorname{Exp}^{\kappa}_{\mathbf{p}}:T_{\mathbf{p}}\mathbb{H}^{n}_{\kappa}\to
\mathbb{H}^{n}_{\kappa}$ moves $\mathbf{p}$ along the unique geodesic in the
direction of~$\mathbf{v}$.  the map is
\begin{equation}
\operatorname{Exp}^{\kappa}_{\mathbf{p}}(\mathbf{v})
\;=\;
\cosh\!\bigl(\sqrt{\kappa}\,\|\mathbf{v}\|\bigr)\,\mathbf{p}
\;+\;
\frac{\sinh\!\bigl(\sqrt{\kappa}\,\|\mathbf{v}\|\bigr)}
     {\sqrt{\kappa}\,\|\mathbf{v}\|}\,\mathbf{v},
\label{eq:exp_map}
\end{equation}
where $\|\cdots\|$ for the Lorentz norm o, 
For small $\|\mathbf{v}\|$ this reduces to $\mathbf{p}+\mathbf{v}$, mirroring
the Euclidean limit, while for large $\|\mathbf{v}\|$ the hyperbolic
$\cosh/\sinh$ terms dominate, capturing the curvature‐induced stretching of
space.

\paragraph{Logarithm Map}
Conversely, the \emph{logarithm map}
$\operatorname{Log}^{\kappa}_{\mathbf{p}}:\mathbb{H}^{n}_{\kappa}\to
T_{\mathbf{p}}\mathbb{H}^{n}_{\kappa}$ sends a point $\mathbf{q}$ back to the
tangent space at~$\mathbf{p}$, producing the initial velocity vector of the
geodesic from $\mathbf{p}$ to $\mathbf{q}$.  Using the distance
$d_{\kappa}(\mathbf{p},\mathbf{q})$ from~\eqref{eq:lorentz_distance}, we have
\begin{equation}
\operatorname{Log}^{\kappa}_{\mathbf{p}}(\mathbf{q})
\;=\;
\frac{d_{\kappa}(\mathbf{p},\mathbf{q})}
     {\sinh\!\bigl(\sqrt{\kappa}\,d_{\kappa}(\mathbf{p},\mathbf{q})\bigr)}
\left(
  \mathbf{q}
  +\kappa\,\langle\mathbf{p},\mathbf{q}\rangle_{\mathbb{H}}\;\mathbf{p}
\right),
\label{eq:log_map}
\end{equation}
which indeed satisfies
$\operatorname{Exp}^{\kappa}_{\mathbf{p}}\!\bigl(\operatorname{Log}^{\kappa}_{\mathbf{p}}(\mathbf{q})\bigr)
=\mathbf{q}$.  The factor in front rescales the component of $\mathbf{q}$
orthogonal to $\mathbf{p}$ so that its norm equals the hyperbolic distance,
giving a first‐order approximation to motion on the manifold that is exact
along geodesics.

From these definitions, we highlight two key insights:\textbf{(1)} \textbf{Any vector that satisfies the hyperboloid constraint (Eq.~\eqref{eq:hyperb}) is a valid point on the Lorentz manifold and inherits the geometric properties of hyperbolic space.}\textbf{(2)} \textbf{A higher Lorentzian inner product indicates greater semantic similarity between two points in hyperbolic space.} 

\begin{proposition}[Hyperboloid Membership]
Let $\kappa>0$.  
A vector $\mathbf{x}\in\mathbb{R}^{\,n+1}$ with $x_0>0$ satisfies the hyperboloid
constraint
\[
\langle\mathbf{x},\mathbf{x}\rangle_{\mathbb{H}}=-\kappa^{-1}
\quad\Longleftrightarrow\quad
\mathbf{x}\in\mathbb{H}^{n}_{\kappa}.
\]
Consequently every such $\mathbf{x}$ is a valid point of the Lorentz (hyperboloid)
model of the hyperbolic space of constant sectional curvature $-\kappa$,
and inherits all of its geometric properties.
\end{proposition}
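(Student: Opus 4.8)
The plan is to recognize that the biconditional is not a computation but a direct unfolding of the \emph{definition} of the Lorentz model, after which a single classical fact supplies the geometric conclusion. First I would recall the definition given in~\eqref{eq:hyperb}, which sets
\[
\mathbb{H}^n_\kappa = \{\mathbf{x}\in\mathbb{R}^{n+1} \mid \langle\mathbf{x},\mathbf{x}\rangle_\mathbb{H} = -\kappa^{-1},\ x_0>0\}.
\]
Reading the right-hand side as a logical conjunction, the assertion $\mathbf{x}\in\mathbb{H}^n_\kappa$ is by construction nothing other than the statement that $\mathbf{x}$ satisfies both $x_0>0$ and $\langle\mathbf{x},\mathbf{x}\rangle_\mathbb{H} = -\kappa^{-1}$. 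Under the standing hypothesis $x_0>0$, the two conditions displayed in the proposition therefore coincide, so both the forward implication (constraint $\Rightarrow$ membership) and the reverse implication (membership $\Rightarrow$ constraint) are immediate; no estimate or manipulation of the Lorentzian form is needed.

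For the concluding sentence, the substantive point is that $\mathbb{H}^n_\kappa$, equipped with the metric induced from the ambient Minkowski inner product~\eqref{eq:lor_inner}, really is the hyperboloid model of hyperbolic space of constant sectional curvature $-\kappa$. I would establish this by invoking the classical identification of the upper sheet of the two-sheeted hyperboloid with a constant-negative-curvature Riemannian manifold, rather than rederiving it. Granting this identification, any vector satisfying the constraint is a bona fide point of the manifold and hence automatically inherits every structure defined intrinsically on $\mathbb{H}^n_\kappa$, in particular the geodesic distance of~\eqref{eq:lorentz_distance} and the exponential and logarithm maps of~\eqref{eq:exp_map} and~\eqref{eq:log_map}.

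The only place where genuine content hides is this last identification. A fully self-contained argument would need to verify that the tangent space at a point $\mathbf{p}$ is the Minkowski-orthogonal complement of $\mathbf{p}$, that the restriction of $\langle\cdot,\cdot\rangle_\mathbb{H}$ to this complement is positive definite (so that the submanifold is Riemannian, which follows because $\mathbf{p}$ is timelike), and that the resulting sectional curvature equals the constant $-\kappa$ via a curvature-tensor computation. I expect this to be the main obstacle were one to insist on self-containment; but since it is a standard result in the theory of the Lorentz model, the intended proof should simply cite it, leaving the biconditional itself as a one-line reading of~\eqref{eq:hyperb}.
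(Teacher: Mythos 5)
Your proposal is correct and takes essentially the same route as the paper: the paper's proof also consists of nothing more than unfolding the definition of $\mathbb{H}^n_\kappa$ in both directions, so the biconditional is a definitional tautology. Your additional remarks on what the concluding sentence really requires (the classical identification of the upper hyperboloid sheet, with the metric induced by the Minkowski form, as a Riemannian manifold of constant curvature $-\kappa$) are more careful than the paper itself, which asserts that consequence without comment.
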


\begin{proof}
\emph{($\Rightarrow$)}  
By definition
\[
\mathbb{H}^{n}_{\kappa}\;=\;
\Bigl\{\mathbf{z}\in\mathbb{R}^{\,n+1}\ \bigl|\ 
\langle\mathbf{z},\mathbf{z}\rangle_{\mathbb{H}}=-\kappa^{-1},\ z_0>0
\Bigr\},
\]
so any $\mathbf{x}$ satisfying the stated constraint (with $x_0>0$)
belongs to $\mathbb{H}^{n}_{\kappa}$.

\smallskip
\noindent\emph{($\Leftarrow$)}  
Conversely, if $\mathbf{x}\in\mathbb{H}^{n}_{\kappa}$,
then by the same defining condition we have
$\langle\mathbf{x},\mathbf{x}\rangle_{\mathbb{H}}=-\kappa^{-1}$ and $x_0>0$.
Hence the two sets coincide, establishing the equivalence.
\end{proof}

\begin{proposition}[Monotonicity of the Lorentzian Inner Product]
For any two points
$\mathbf{x},\mathbf{y}\in\mathbb{H}^{n}_{\kappa}$ with $\kappa>0$, the geodesic
distance~\eqref{eq:lorentz_distance}
\[
d_{\kappa}(\mathbf{x},\mathbf{y})
=\frac{1}{\sqrt{\kappa}}\,
\operatorname{arccosh}\!\bigl(-\kappa\,\langle\mathbf{x},\mathbf{y}\rangle_{\mathbb{H}}\bigr)
\]
is a strictly \emph{decreasing} function of the Lorentzian inner product
$\langle\mathbf{x},\mathbf{y}\rangle_{\mathbb{H}}$.
Equivalently, a \textbf{larger inner product indicates
smaller hyperbolic distance} and therefore higher semantic similarity.
\end{proposition}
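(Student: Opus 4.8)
The plan is to treat the geodesic distance as a one-variable function of the scalar $s := \langle\mathbf{x},\mathbf{y}\rangle_{\mathbb{H}}$ and to show this function is strictly decreasing by decomposing it into a composition of elementary monotone maps. Writing $d_{\kappa} = \frac{1}{\sqrt{\kappa}}\operatorname{arccosh}(-\kappa s)$, I would factor the dependence on $s$ into three pieces: the affine map $s\mapsto -\kappa s$, the scalar function $\operatorname{arccosh}$, and the positive rescaling by $\frac{1}{\sqrt{\kappa}}$. Each piece has a definite monotonicity, and their composition yields the claim.

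First I would pin down the admissible range of $s$, since $\operatorname{arccosh}$ is only real-valued on $[1,\infty)$; that is, I must show $-\kappa s \ge 1$, equivalently $s \le -\kappa^{-1}$, for all pairs of points on the hyperboloid. I would establish this via a reverse Cauchy--Schwarz inequality in Minkowski space. Using the temporal/spatial split $\langle\mathbf{x},\mathbf{y}\rangle_{\mathbb{H}} = -x_0 y_0 + \langle\mathbf{x}_{\mathrm{space}},\mathbf{y}_{\mathrm{space}}\rangle$ together with the membership relation $x_0 = \sqrt{\|\mathbf{x}_{\mathrm{space}}\|^{2}+\kappa^{-1}}$ (and likewise for $\mathbf{y}$), I would apply the ordinary Cauchy--Schwarz bound $\langle\mathbf{x}_{\mathrm{space}},\mathbf{y}_{\mathrm{space}}\rangle \le \|\mathbf{x}_{\mathrm{space}}\|\,\|\mathbf{y}_{\mathrm{space}}\|$ and reduce the desired inequality, after squaring, to $2\|\mathbf{x}_{\mathrm{space}}\|\,\|\mathbf{y}_{\mathrm{space}}\| \le \|\mathbf{x}_{\mathrm{space}}\|^{2}+\|\mathbf{y}_{\mathrm{space}}\|^{2}$, which is just $(\|\mathbf{x}_{\mathrm{space}}\|-\|\mathbf{y}_{\mathrm{space}}\|)^{2}\ge 0$. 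Tracking the two equality conditions (parallel spatial parts and equal norms) shows $-\kappa s = 1$ precisely when $\mathbf{x}=\mathbf{y}$, consistent with $d_{\kappa}=0$.

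With the domain fixed, monotonicity follows by composition. On $[1,\infty)$ the function $\operatorname{arccosh}$ is strictly increasing, since its derivative $1/\sqrt{u^{2}-1}$ is strictly positive on $(1,\infty)$ and continuity extends strict monotonicity to the endpoint $u=1$. The inner map $s\mapsto -\kappa s$ is strictly decreasing because $\kappa>0$, and multiplying by $\frac{1}{\sqrt{\kappa}}>0$ preserves the direction of monotonicity. A strictly decreasing map composed with a strictly increasing one is strictly decreasing, so $d_{\kappa}$ is a strictly decreasing function of $s$. As a cross-check I would differentiate directly to obtain $\frac{\mathrm{d}\,d_{\kappa}}{\mathrm{d}s} = -\sqrt{\kappa}\,\big/\sqrt{\kappa^{2}s^{2}-1} < 0$ on the interior $s < -\kappa^{-1}$, confirming the sign and recovering the vertical-tangent behaviour at the boundary $s=-\kappa^{-1}$.

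The only step demanding genuine work is the reverse Cauchy--Schwarz bound that confines the argument of $\operatorname{arccosh}$ to $[1,\infty)$, and even that collapses to the elementary inequality $2ab\le a^{2}+b^{2}$; the remainder is bookkeeping about compositions of monotone functions. Thus the main obstacle is simply ensuring the argument of $\operatorname{arccosh}$ is correctly constrained so that its strictly-increasing regime is the one in play. Since the Hyperboloid Membership proposition already guarantees $\langle\mathbf{x},\mathbf{x}\rangle_{\mathbb{H}}=\langle\mathbf{y},\mathbf{y}\rangle_{\mathbb{H}}=-\kappa^{-1}$ with positive time components, the reverse-inequality argument applies verbatim, and the equivalence ``larger inner product $\Rightarrow$ smaller distance $\Rightarrow$ higher semantic similarity'' follows at once.
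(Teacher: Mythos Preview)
Your proposal is correct and follows essentially the same composition-of-monotone-maps/chain-rule argument as the paper's proof. The one addition is that you rigorously verify the domain constraint $-\kappa s \ge 1$ via a reverse Cauchy--Schwarz inequality, whereas the paper simply asserts $u\ge 1$ without justification; this makes your version more complete but not a different route.
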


\begin{proof}
Define
\[
u \;:=\;
-\kappa\,\langle\mathbf{x},\mathbf{y}\rangle_{\mathbb{H}}
\quad\bigl(\;u\ge 1\;\bigr)
\]
\[
f(u)\;:=\;\frac{1}{\sqrt{\kappa}}\operatorname{arccosh}(u)
=\;d_{\kappa}(\mathbf{x},\mathbf{y}).
\]
Because $\operatorname{arccosh}$ is strictly increasing on $[1,\infty)$ and
\[
\frac{du}{d\langle\mathbf{x},\mathbf{y}\rangle_{\mathbb{H}}}
=-\kappa<0,
\]
the chain rule gives
\[
\frac{d\,d_{\kappa}(\mathbf{x},\mathbf{y})}
     {d\langle\mathbf{x},\mathbf{y}\rangle_{\mathbb{H}}}
=\frac{df}{du}\,\frac{du}{d\langle\mathbf{x},\mathbf{y}\rangle_{\mathbb{H}}}
<0.
\]
Hence $d_{\kappa}$ decreases strictly as
$\langle\mathbf{x},\mathbf{y}\rangle_{\mathbb{H}}$ increases.  
Since hyperbolic distance quantifies dissimilarity, the inverse relationship
asserts that a larger (i.e.,\ less negative) Lorentzian inner product encodes
greater semantic similarity between the points. 
\end{proof}
Hence, we adopt the Lorentz inner product as a similarity measure. 

\begin{proposition}[Bias--Variance Reduction]
\label{prop:var-reduce2}
Let $\sigma^{2}_{\mathrm{E}} \!=\! \operatorname{Var}[\varepsilon_{\mathrm{E}}]$,
      $\sigma^{2}_{\mathrm{H}} \!=\! \operatorname{Var}[\varepsilon_{\mathrm{H}}]$,
and $\rho \!=\! \operatorname{Corr}[\varepsilon_{\mathrm{E}},\varepsilon_{\mathrm{H}}]$.
Denote the Euclidean and Hyperbolic biases by
$b_{\mathrm{E}}\!=\!\mathbb E[\text{Sim}_{\mathrm{E}}\!-\!\text{Sim}^{\star}]$  
and $b_{\mathrm{H}}\!=\!\mathbb E[\text{Sim}_{\mathrm{H}}\!-\!\text{Sim}^{\star}]$.
For any mixing weight $\alpha\!\in\![0,1]$ define the hybrid estimator
\(
\text{HEMix} \;=\;
(1-\alpha)\text{Sim}_{\mathrm{E}} + \alpha\text{Sim}_{\mathrm{H}}.
\)

Then the mean--squared error (MSE) of \text{HEMix}
\begin{align}
\operatorname{MSE}(\text{HEMix})
&=\mathbb E\!\bigl[(\text{HEMix}-\text{Sim}^{\star})^{2}\bigr] \notag\\
&=((1-\alpha)b_{\mathrm{E}}+\alpha b_{\mathrm{H}})^{2}
 +(1-\alpha)^{2}\sigma_{\mathrm{E}}^{2}
 +\alpha^{2}\sigma_{\mathrm{H}}^{2}
 +2\alpha(1-\alpha)\rho\sigma_{\mathrm{E}}\sigma_{\mathrm{H}}
\label{eq:mse-mix}
\end{align}
is a strictly convex quadratic in $\alpha$.  
Its unique minimizer is
\begin{equation}
\alpha^{\star}=
\frac{\,
      (\sigma_{\mathrm{E}}^{2}+\rho\sigma_{\mathrm{E}}\sigma_{\mathrm{H}})
      +b_{\mathrm{E}}\!\left(b_{\mathrm{E}}-b_{\mathrm{H}}\right)
     \,}
     {\,
      \sigma_{\mathrm{E}}^{2}+\sigma_{\mathrm{H}}^{2}
      -2\rho\sigma_{\mathrm{E}}\sigma_{\mathrm{H}}
      +(b_{\mathrm{E}}-b_{\mathrm{H}})^{2}
     \,},
\end{equation}
which always lies in $(0,1)$ whenever $\rho<1$ or $b_{\mathrm{E}}\neq b_{\mathrm{H}}$.
Moreover,
\begin{equation}
\operatorname{MSE}\bigl(\text{HEMix};\alpha^{\star}\bigr)
\;<\;
\min\!
\bigl\{
      \operatorname{MSE}(\text{Sim}_{\mathrm{E}}),\;
      \operatorname{MSE}(\text{Sim}_{\mathrm{H}})
\bigr\}.
\end{equation}
\end{proposition}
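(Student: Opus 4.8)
The plan is to treat $\text{Sim}_{\mathrm{E}}$ and $\text{Sim}_{\mathrm{H}}$ as two noisy estimators of the ideal score $\text{Sim}^{\star}$ and to read off the MSE of their convex combination through a standard bias--variance decomposition. First I would write
\[
\text{HEMix}-\text{Sim}^{\star}=\underbrace{\bigl((1-\alpha)b_{\mathrm{E}}+\alpha b_{\mathrm{H}}\bigr)}_{\text{deterministic bias}}+\underbrace{\bigl((1-\alpha)\varepsilon_{\mathrm{E}}+\alpha\varepsilon_{\mathrm{H}}\bigr)}_{\text{zero-mean noise}},
\]
so that squaring and taking expectations annihilates the bias--noise cross term (the noise is zero-mean) and leaves $\operatorname{MSE}(\text{HEMix})=\bigl((1-\alpha)b_{\mathrm{E}}+\alpha b_{\mathrm{H}}\bigr)^{2}+\operatorname{Var}\bigl[(1-\alpha)\varepsilon_{\mathrm{E}}+\alpha\varepsilon_{\mathrm{H}}\bigr]$. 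Expanding the variance with $\operatorname{Cov}[\varepsilon_{\mathrm{E}},\varepsilon_{\mathrm{H}}]=\rho\sigma_{\mathrm{E}}\sigma_{\mathrm{H}}$ reproduces the variance block of Eq.~\eqref{eq:mse-mix}; this step is routine, being essentially the identity $\operatorname{Var}[aX+bY]=a^{2}\operatorname{Var}X+b^{2}\operatorname{Var}Y+2ab\operatorname{Cov}(X,Y)$.

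Next I would view the result as a quadratic $g(\alpha)$ and isolate its leading coefficient, which works out to $(b_{\mathrm{E}}-b_{\mathrm{H}})^{2}+\bigl(\sigma_{\mathrm{E}}^{2}+\sigma_{\mathrm{H}}^{2}-2\rho\sigma_{\mathrm{E}}\sigma_{\mathrm{H}}\bigr)$. The key observation is that the parenthesized term equals $\operatorname{Var}[\varepsilon_{\mathrm{E}}-\varepsilon_{\mathrm{H}}]\ge 0$, vanishing only when $\varepsilon_{\mathrm{E}}=\varepsilon_{\mathrm{H}}$ almost surely (i.e.\ $\rho=1$ and $\sigma_{\mathrm{E}}=\sigma_{\mathrm{H}}$). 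Hence under the hypothesis $\rho<1$ or $b_{\mathrm{E}}\neq b_{\mathrm{H}}$ the leading coefficient is strictly positive, $g$ is strictly convex, and it has a unique global minimizer. I would then solve $g'(\alpha^{\star})=0$; since $g'$ is affine this is a single linear equation, and collecting the $\alpha$-terms yields the stated closed form, with denominator exactly the strictly positive leading coefficient. As a sanity check, setting $b_{\mathrm{E}}=b_{\mathrm{H}}=0$ should recover the classical minimum-variance weight $(\sigma_{\mathrm{E}}^{2}-\rho\sigma_{\mathrm{E}}\sigma_{\mathrm{H}})/(\sigma_{\mathrm{E}}^{2}+\sigma_{\mathrm{H}}^{2}-2\rho\sigma_{\mathrm{E}}\sigma_{\mathrm{H}})$ for combining two correlated estimators, which pins down the sign of the $\rho\sigma_{\mathrm{E}}\sigma_{\mathrm{H}}$ term in the numerator.

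For the final strict inequality I would simply exploit strict convexity: since $\operatorname{MSE}(\text{Sim}_{\mathrm{E}})=g(0)$ and $\operatorname{MSE}(\text{Sim}_{\mathrm{H}})=g(1)$, and $g$ attains its unique minimum at $\alpha^{\star}$, we have $g(\alpha^{\star})<g(\alpha)$ for every $\alpha\neq\alpha^{\star}$; in particular $g(\alpha^{\star})<g(0)$ and $g(\alpha^{\star})<g(1)$ as soon as $\alpha^{\star}\notin\{0,1\}$, which gives $\operatorname{MSE}(\text{HEMix};\alpha^{\star})<\min\{\operatorname{MSE}(\text{Sim}_{\mathrm{E}}),\operatorname{MSE}(\text{Sim}_{\mathrm{H}})\}$.

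I expect the delicate step to be establishing that $\alpha^{\star}$ lies strictly inside $(0,1)$, rather than merely being distinct from the endpoints. Evaluating the derivative at the boundary gives $g'(0)\propto -(\sigma_{\mathrm{E}}^{2}-\rho\sigma_{\mathrm{E}}\sigma_{\mathrm{H}}+b_{\mathrm{E}}(b_{\mathrm{E}}-b_{\mathrm{H}}))$ and, symmetrically, $g'(1)\propto \sigma_{\mathrm{H}}^{2}-\rho\sigma_{\mathrm{E}}\sigma_{\mathrm{H}}+b_{\mathrm{H}}(b_{\mathrm{H}}-b_{\mathrm{E}})$, so interiority is equivalent to both quantities being positive. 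These two quantities sum to the (positive) denominator of $\alpha^{\star}$, so at least one is always positive, but both can fail when one geometry's noise variance dominates while $\rho$ is close to $1$. I would therefore either record this two-sided positivity as the precise condition for $\alpha^{\star}\in(0,1)$, or fall back on the weaker but unconditional conclusion that strict convexity already forces $g(\alpha^{\star})<\min\{g(0),g(1)\}$ whenever $\alpha^{\star}\neq 0,1$. The remaining work (expanding the square and matching coefficients) is purely mechanical.
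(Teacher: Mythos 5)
Your proposal is correct and follows essentially the same route as the paper's own proof: write the bias--variance decomposition as a quadratic $f(\alpha)=A\alpha^{2}+2B\alpha+C$, use $A>0$ for strict convexity, solve the affine stationarity equation to get $\alpha^{\star}=-B/A$, and obtain the strict inequality from uniqueness of the global minimizer. The useful difference is that your extra care exposes three defects that the paper's version glosses over. (i) Your sanity check is right, and it reveals a sign error in the statement: the correct numerator of $\alpha^{\star}$ is $\sigma_{\mathrm{E}}^{2}-\rho\sigma_{\mathrm{E}}\sigma_{\mathrm{H}}+b_{\mathrm{E}}(b_{\mathrm{E}}-b_{\mathrm{H}})$, not $\sigma_{\mathrm{E}}^{2}+\rho\sigma_{\mathrm{E}}\sigma_{\mathrm{H}}+b_{\mathrm{E}}(b_{\mathrm{E}}-b_{\mathrm{H}})$; the paper's proof is internally inconsistent here as well, since it sets $B=-\bigl[(b_{\mathrm{H}}-b_{\mathrm{E}})b_{\mathrm{E}}+\sigma_{\mathrm{H}}^{2}-\rho\sigma_{\mathrm{E}}\sigma_{\mathrm{H}}\bigr]$, which has $\sigma_{\mathrm{H}}^{2}$ where the expansion produces $\sigma_{\mathrm{E}}^{2}$ and carries the bias term with the wrong sign. (ii) You are right that interiority of $\alpha^{\star}$ does not follow from $\rho<1$ or $b_{\mathrm{E}}\neq b_{\mathrm{H}}$: your two boundary quantities are exactly $-B$ and $A+B$, and since they sum to $A>0$ at most one can fail (so your phrase ``both can fail'' should read ``one of them can fail''), but one genuinely can --- e.g.\ $\sigma_{\mathrm{E}}=1$, $\sigma_{\mathrm{H}}=0.1$, $\rho=0.9$, $b_{\mathrm{E}}=b_{\mathrm{H}}=0$ gives $\alpha^{\star}\approx 1.10$ --- so the proposition's claim that $\alpha^{\star}\in(0,1)$ is false as stated. (iii) The paper's strictness argument is gappy where yours is tight: its assertion ``$B^{2}/A>0$'' silently assumes $B\neq 0$, i.e.\ $\alpha^{\star}\neq 0$ (with $b_{\mathrm{E}}=1$, $b_{\mathrm{H}}=2$, $\sigma_{\mathrm{E}}=1.2$, $\sigma_{\mathrm{H}}=1$, $\rho=11/30$ one gets $B=0$ and equality with $\operatorname{MSE}(\text{Sim}_{\mathrm{E}})$ despite $\rho<1$), and its step from $f(\alpha^{\star})<\max\{f(0),f(1)\}$ together with $f(0)\neq f(1)$ to $f(\alpha^{\star})<\min\{f(0),f(1)\}$ is a non sequitur; your formulation --- strictness holds exactly when $\alpha^{\star}\notin\{0,1\}$, with the two-sided positivity condition recorded as the precise hypothesis for interiority --- is the correct repair.
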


\begin{proof}
Let
\(
f(\alpha)\!=\!\operatorname{MSE}(\text{HEMix})
\)
in \eqref{eq:mse-mix}.
Write it as
$
f(\alpha)=A\alpha^{2}+2B\alpha+C
$
with
\[
\begin{aligned}
A &= (b_{\mathrm{H}}-b_{\mathrm{E}})^{2}
     +\sigma_{\mathrm{H}}^{2}+\sigma_{\mathrm{E}}^{2}
     -2\rho\sigma_{\mathrm{E}}\sigma_{\mathrm{H}} \;>\; 0,\\
B &= -\bigl[(b_{\mathrm{H}}-b_{\mathrm{E}})b_{\mathrm{E}}
             +\sigma_{\mathrm{H}}^{2}
             -\rho\sigma_{\mathrm{E}}\sigma_{\mathrm{H}}\bigr],\\
C &= b_{\mathrm{E}}^{2}+\sigma_{\mathrm{E}}^{2}=f(0).
\end{aligned}
\]
Since $A>0$, $f$ is strictly convex; the stationary point
$\alpha^{\star}\!=\!-B/A$ is the global minimum, yielding
$
f(\alpha^{\star}) = C - B^{2}/A .
$
Because $B^{2}/A>0$, we have $f(\alpha^{\star})<f(0)=\operatorname{MSE}(\text{Sim}_{\mathrm{E}})$.
Convexity further implies
$
f(\alpha^{\star})<\max\{f(0),f(1)\}.
$
Whenever $f(1)\!\neq\!f(0)$ (i.e.,\ the two single-space estimators do not have identical MSE) this gives
$
f(\alpha^{\star})<\min\{f(0),f(1)\},
$
which is exactly the desired inequality.  
\end{proof}

\section{Detailed Limitations of RefCLIP}
\label{Appendix:LimitationsofRefCLIP}
\begin{figure*}
    \centering
    \setlength{\abovecaptionskip}{6pt}   
    \setlength{\belowcaptionskip}{1pt}  
    \includegraphics[width=1.0\linewidth]{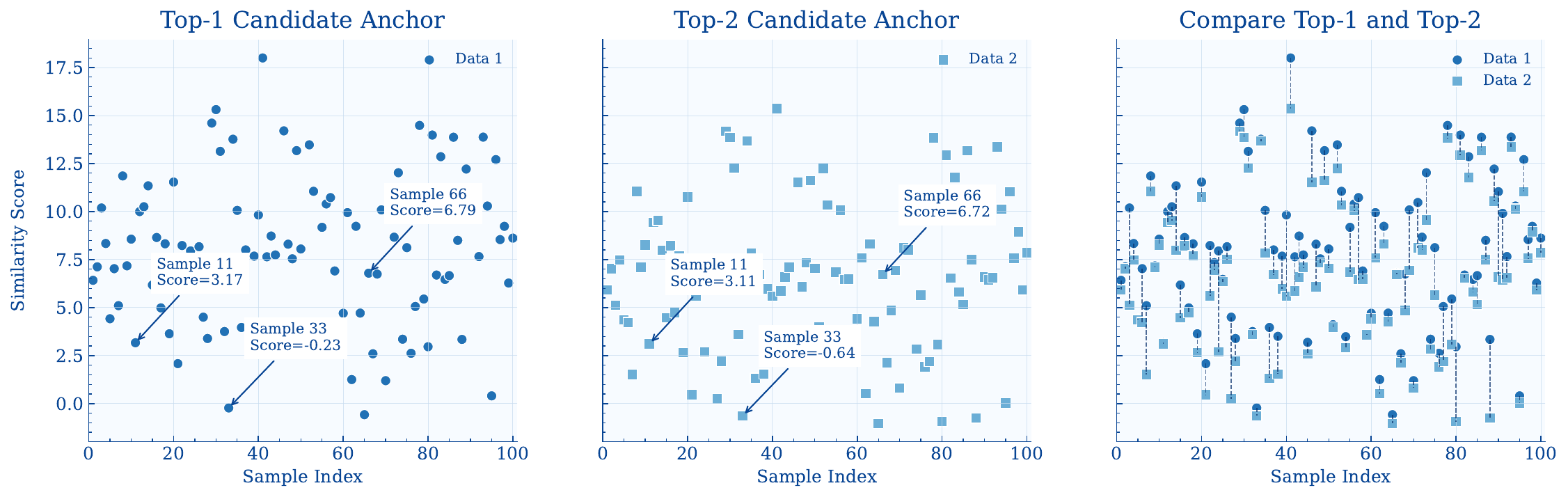}
    \caption{We randomly select 100 samples from the dataset and visualize their similarity scores. From left to right: (1) the similarity scores of top-scoring anchors (dark blue dots), (2) the similarity scores of second-best anchors (light blue squares), and (3) an overlaid view combining both. The dashed vertical segments connect the top and second-best scores for each sample, illustrating that second-best anchors in many cases have higher similarity than top anchors in other samples.}
    \label{Similarity_Score}
\end{figure*}
As the main text says, weakly supervised methods such as RefCLIP effectively simplify the REC task by reducing it to an anchor-text matching problem. Specifically, the anchor selection mechanism in methods like RefCLIP can be expressed as:
\begin{equation}
a^* = \arg\max_{a \in \mathcal{A}} \phi(T, I, a),
\label{RECTask2}
\end{equation}
where \( \phi(T, I, a) \) represents the similarity between the text expression \( T \), the image \( I \), and anchor \( a \) from the anchor set \( \mathcal{A} \).
However, this max-selection strategy implicitly assumes that the number of referred objects is known in advance, making it unsuitable for Generalized Referring Expression Comprehension (GREC), where the number of targets is unknown. 

A natural alternative is to apply a threshold to filter anchors based on their similarity scores. Yet, our experimental analysis reveals the limitations of this approach. As shown in Fig.~\ref{Similarity_Score}, the second-best anchor in some samples exhibits higher similarity than the top-scoring anchors in others, rendering a universal threshold ineffective for consistent selection. This highlights the inadequacy of threshold-based selection under WGREC conditions.

\section{Evaluation Metrics}

\label{appendx:metrics}

\paragraph{\textbf{Precision@(F$_1$=1, IoU$\ge$0.5)}}
For each sample, let \(\mathcal{G}\) and \(\mathcal{P}\) be the ground-truth and predicted
bounding-box sets.  A prediction is \textbf{matched} to a ground-truth box if their
intersection-over-union (IoU) is at least~0.5.  
If several predictions match the \emph{same} ground-truth box, only the one with
the highest IoU is kept as a true positive (\textsc{tp}); the rest are false positives
(\textsc{fp}).  
Unmatched ground-truth boxes are false negatives (\textsc{fn}).  
The sample-level F\(_1\) score is
\[
F_1 \;=\; \frac{2\;{TP}}{2\;TP + FP + FN}.
\]
For \emph{no-target} samples (\(|\mathcal{G}|=0\)), we set \(F_1=1\) if
\(|\mathcal{P}|=0\) and \(F_1=0\) otherwise.

\noindent
\textbf{Precision@(F$_1$=1, IoU$\ge$0.5)} is the proportion of samples whose
\(F_1\) score equals~1:
\[
\operatorname{Precision@}(F_1{=}1,\;IoU{\ge}0.5)
\;=\;
\frac{1}{N}\sum_{i=1}^{N}\mathds{1}\!\left[F_1^i=1\right].
\]
It reports the percentage of images perfectly predicted (no missed or
spurious detections) under the 0.5 IoU criterion.

\paragraph{N-acc.} quantifies the model’s ability to correctly identify
\emph{no-target} samples—images that contain no ground-truth objects.
For such a sample,
\begin{itemize}
  \item \textbf{true positive (TP):} the model predicts \emph{no} bounding boxes;
  \item \textbf{false negative (FN):} the model predicts at least one bounding box.
\end{itemize}
The metric is
\[
  \text{N-acc.}\;=\;
  \frac{\text{TP}}{\text{TP}+\text{FN}},
\]
the proportion of no-target samples for which the model outputs no detections.

\section{Detailed Prompt Design}
\label{Appendix:DetailedPromptDesign}
$\mathbf{P_G}$: 
Task Explanation: You need to process an image and a referring expression. The image may contain zero, one, or multiple target objects corresponding to the referring expression. Analyze the image to determine whether the target exists. If the target does not exist or the referring expression is empty, output a single number "0". If the target exists, output the number of targets and generate a unique referring expression for each target. The referring expressions must describe distinct targets unambiguously using attributes like color, position, size, etc. 

$\mathbf{P_C}$: 
You should provide a number indicating how many targets exist in the image, and then describe each target with a short, distinct phrase. Prefix each phrase with its ordinal number. The number of targets is extremely important — please check carefully. The phrases must be accurate and distinct. 

$\mathbf{P_E}$: 
For example, if the referring expression is ``3 people", you should output: 
``3\textbackslash n1. person ...\textbackslash n2. person ...\textbackslash n3. person ..."
The word ``and" is generally used between two target items. 

$\mathbf{P_Q}$: 
The referring expression is: \{referring expression\}

The specific usage process of prompts are shown in Tab.~\ref{tab:supp_beach1}, Tab.~\ref{tab:supp_beach2} and Tab.~\ref{tab:supp_beach3}:

\begin{table*}[!t]
\label{Tab:prompt1}
\centering
\captionsetup{skip=4pt}
\begin{tcolorbox}[enhanced,
  width=0.98\textwidth,   
  colback=white, colframe=black!20,
  boxrule=0.4pt, arc=2pt,
  left=6pt,right=6pt,top=6pt,bottom=6pt]
  \begin{tabularx}{\linewidth}{>{\raggedright\arraybackslash}X m{4.2cm}}
  \textbf{System}: Task Explanation: You need to process an image and a referring expression. The image may contain zero, one, or multiple target objects corresponding to the referring expression. Analyze the image to determine whether the target exists. If the target does not exist or the referring expression is empty, output a single number "0". If the target exists, output the number of targets and generate a unique referring expression for each target. The referring expressions must describe distinct targets unambiguously using attributes like color, position, size, etc. \\
    \textbf{User}: The referring expression is: \{the right boy in black shirt is playing skateboard\}. You should provide a number indicating how many targets exist in the image, and then describe each target with a short, distinct phrase. Prefix each phrase with its ordinal number. The number of targets is extremely important — please check carefully. The phrases must be accurate and distinct. For example, if the referring expression is ``3 people", you should output: ``3\textbackslash n1. person ...\textbackslash n2. person ...\textbackslash n3. person ..."The word ``and" is generally used between two target items. 
\\[0.4em]
    \textbf{Referential Decoupling}: ``0"\\
    \textbf{\textcolor{red}{No Referent Grounding and directly Return No-target}} \\
    \centering
    \includegraphics[width=0.35\linewidth]{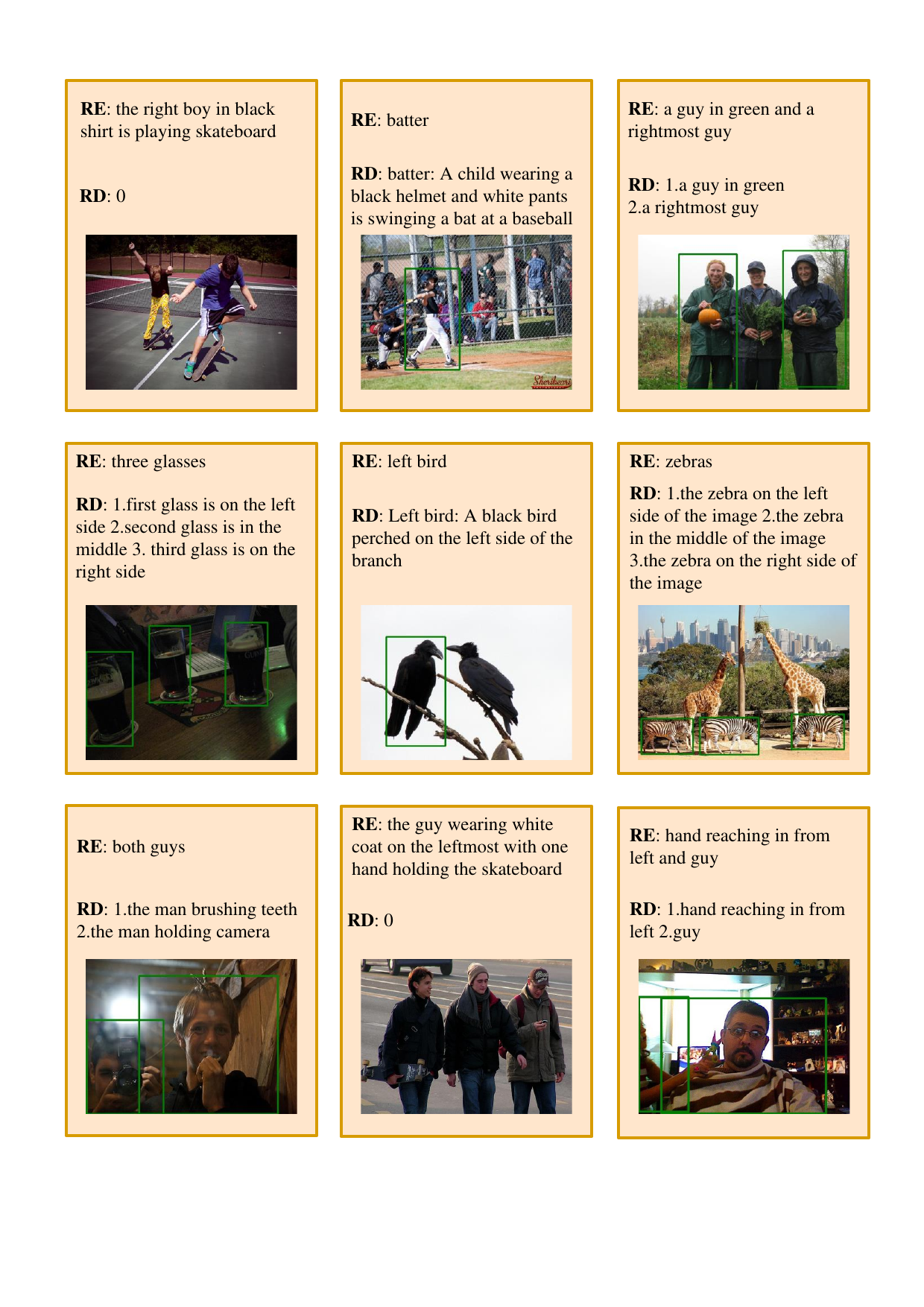}
  \end{tabularx}
\end{tcolorbox}

\caption{No-target case.}
\label{tab:supp_beach1}
\end{table*}

\begin{table*}[!t]
\label{Tab:prompt2}
\centering
\captionsetup{skip=4pt}
\begin{tcolorbox}[enhanced,
  width=0.98\textwidth,   
  colback=white, colframe=black!20,
  boxrule=0.4pt, arc=2pt,
  left=6pt,right=6pt,top=6pt,bottom=6pt]
  \begin{tabularx}{\linewidth}{>{\raggedright\arraybackslash}X m{4.2cm}}
  \textbf{System}: Task Explanation: You need to process an image and a referring expression. The image may contain zero, one, or multiple target objects corresponding to the referring expression. Analyze the image to determine whether the target exists. If the target does not exist or the referring expression is empty, output a single number "0". If the target exists, output the number of targets and generate a unique referring expression for each target. The referring expressions must describe distinct targets unambiguously using attributes like color, position, size, etc. \\
    \textbf{User}: The referring expression is: \{a guy in green and a rightmost guy\}. You should provide a number indicating how many targets exist in the image, and then describe each target with a short, distinct phrase. Prefix each phrase with its ordinal number. The number of targets is extremely important — please check carefully. The phrases must be accurate and distinct. For example, if the referring expression is ``3 people", you should output: ``3\textbackslash n1. person ...\textbackslash n2. person ...\textbackslash n3. person ..."The word ``and" is generally used between two target items. 
\\[0.4em]
    \textbf{Referential Decoupling}: ``2\textbackslash n1.a guy in green\textbackslash n2.a rightmost guy"\\
    \textbf{Referent Grounding:} \\
    \centering
    \includegraphics[width=0.35\linewidth]{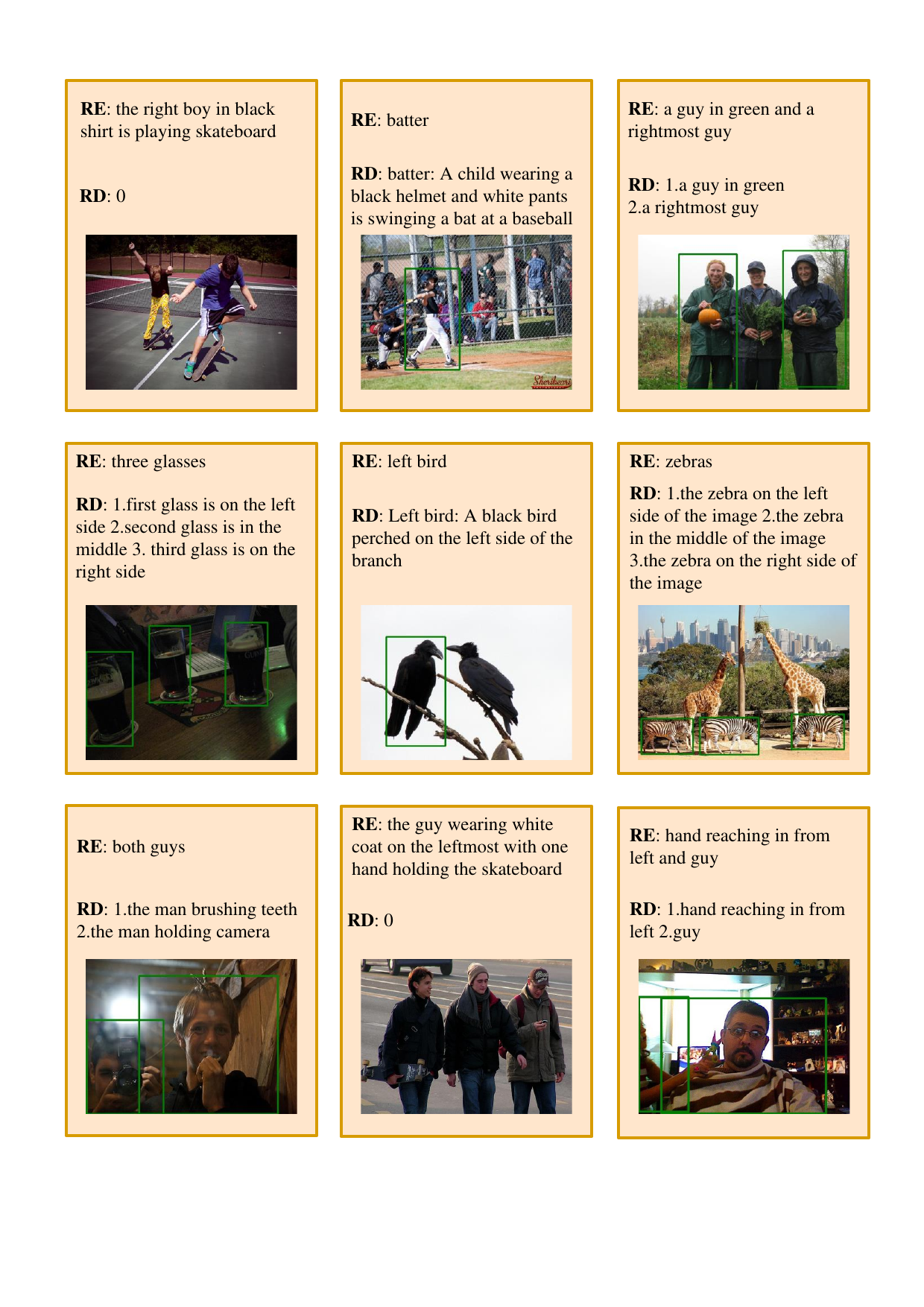}
  \end{tabularx}
\end{tcolorbox}

\caption{One-target case.}
\label{tab:supp_beach2}
\end{table*}

\begin{table*}[!t]
\label{Tab:prompt3}
\centering
\captionsetup{skip=4pt}
\begin{tcolorbox}[enhanced,
  width=0.98\textwidth,   
  colback=white, colframe=black!20,
  boxrule=0.4pt, arc=2pt,
  left=6pt,right=6pt,top=6pt,bottom=6pt]
  \begin{tabularx}{\linewidth}{>{\raggedright\arraybackslash}X m{4.2cm}}
  \textbf{System}: Task Explanation: You need to process an image and a referring expression. The image may contain zero, one, or multiple target objects corresponding to the referring expression. Analyze the image to determine whether the target exists. If the target does not exist or the referring expression is empty, output a single number "0". If the target exists, output the number of targets and generate a unique referring expression for each target. The referring expressions must describe distinct targets unambiguously using attributes like color, position, size, etc. \\
    \textbf{User}: The referring expression is: \{three glasses\}. You should provide a number indicating how many targets exist in the image, and then describe each target with a short, distinct phrase. Prefix each phrase with its ordinal number. The number of targets is extremely important — please check carefully. The phrases must be accurate and distinct. For example, if the referring expression is ``3 people", you should output: ``3\textbackslash n1. person ...\textbackslash n2. person ...\textbackslash n3. person ..."The word ``and" is generally used between two target items. 
\\[0.4em]
    \textbf{Referential Decoupling}: ``2\textbackslash n1.first glass is on the left\textbackslash n2.second glass is in the
middle\textbackslash n3. third glass is on the right side"\\
    \textbf{Referent Grounding:} \\
    \centering
    \includegraphics[width=0.35\linewidth]{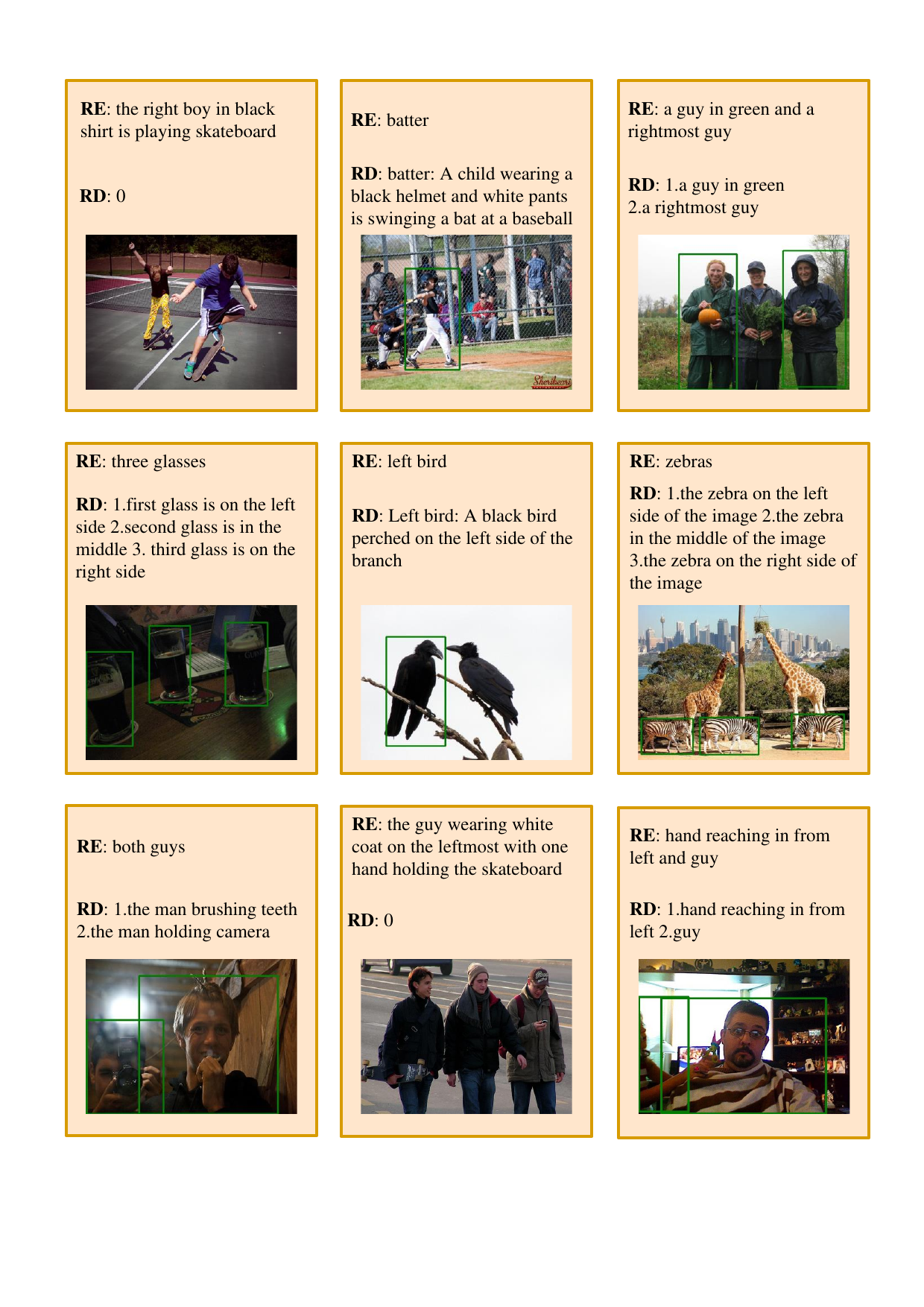}
  \end{tabularx}
\end{tcolorbox}

\caption{Multi-target case.}
\label{tab:supp_beach3}
\end{table*}

\clearpage

\section{More Quantitative Results}
\label{MoreQuantitativeStudies}
\subsection{Unsupervised Generalized Referring Expression Comprehension}
\label{Unsupervised Schema}

To demonstrate the robustness of our framework and explore its potential in a zero-annotation scenario, we extend LIHE to a fully unsupervised setting. In this setting, the model is trained without using any manually annotated data---neither annotated bounding boxes nor corresponding language queries. The only input is the image itself.

To achieve this, we first leverage the Vision-Language Model (VLM) to automatically generate a set of candidate "pseudo referring expressions" for each image. Subsequently, these machine-generated texts are used as the training data for the second stage of our model. As shown in the ``Generated'' row of Tab.~\ref{nosuper}, the experimental results demonstrate that despite relying entirely on machine-generated text, this unsupervised approach still achieves 32.29\%/25.85\%/27.91\% performance on the gRefCOCO val/testA/testB splits, respectively. This result is only about 7-9\% lower than its weakly supervised counterpart trained with authentic, human-annotated text, which strongly demonstrates LIHE's effectiveness in an annotation-free environment. 

\begin{table}[t]
\centering
\footnotesize
\caption{Unsupervised Schema. The bottom row is only using the generated data for training.}
\renewcommand{\arraystretch}{1.2}
\begin{tabular}{l|ccc}
\hline
\multirow{2}{*}{\textbf{Training Set}} & 
\multicolumn{3}{c}{\textbf{gRefCOCO}}  \\
&  val & testA & testB  \\
\hline
 gRefCOCO & 39.61 & 32.19 & 36.44  \\
 Generated & 32.29  &	25.85 &	27.91 \\
\hline
\end{tabular}
\label{nosuper}
\end{table}
\subsection{More Ablation Studies}
\label{Appendix:More Ablation Studies}
\paragraph{Impact of the mixing weight $\alpha$.}
Due to \(\rho\) being unknown, we conduct extensive experiments on the hybrid weight $\alpha$ as shown in Tab.~\ref{alphaab} to find the best weight.
Two clear trends emerge.  
\emph{(i)~U-shaped curve.}  
Extremely small $\alpha$ (closer to a pure Euclidean view) and extremely large $\alpha$ (approaching the hyperbolic-only view) both hurt performance; in every split, the scores first rise, peak in the mid-range, and then drop again.  
\emph{(ii)~Robust sweet-spot.}  
The interval $\alpha\!\approx\!0.4$–$0.7$ consistently delivers the best or near-best numbers across all four benchmarks.  For example, on RefCOCO testA the top accuracy of 60.01\% is achieved at $\alpha\!=\!0.7$, while gRefCOCO testB peaks at 36.44\% with $\alpha\!=\!0.9$, confirming that a balanced mixture captures complementary information from both geometries.  
Overall, the ablation validates the bias-variance analysis: an appropriate hybrid weighting outperforms either single-space similarity.

\begin{table}[t]
\centering
\footnotesize
\caption{Ablation study on \(\alpha\).}
\renewcommand{\arraystretch}{1.3}
\begin{tabular}{l|ccc|ccc|c|ccc}
\hline
\multirow{2}{*}{{$\mathbf{\alpha}$}} &
\multicolumn{3}{c|}{\textbf{RefCOCO}} &
\multicolumn{3}{c|}{\textbf{RefCOCO+}} &
\textbf{RefCOCOg} &
\multicolumn{3}{c}{\textbf{gRefCOCO}} \\[-0.2em]
& val & testA & testB & val & testA & testB & val & val & testA & testB \\
\hline
0.1 & 58.73 & 57.54 & 56.09 & 40.73 & 40.78 & 39.27 & 47.51 & 38.48 & 32.21 & 34.47 \\
0.2 & 61.27 & 60.16 & 59.20 & 41.54 & 41.84 & 39.13 & 47.83 & 38.97 & 32.64 & 34.86 \\
0.3 & 60.88 & 59.08 & 58.33 & 41.14 & 41.18 & 39.09 & 47.22 & 39.07 & 32.22 & 36.27 \\
0.4 & 60.25 & 59.09 & 57.59 & 42.42 & 42.63 & 40.03 & 48.28 & 39.20 & 32.01 & 35.79 \\
0.5 & 60.95 & 59.84 & 58.57 & 41.48 & 42.54 & 39.37 & 48.67 & 39.14 & 32.01 & 35.73 \\
0.6 & 60.05 & 59.06 & 59.02 & 42.04 & 41.58 & 38.17 & 47.55 & 39.25 & 31.71 & 34.97 \\
0.7 & 61.04 & 60.01 & 58.45 & 42.66 & 42.94 & 39.15 & 47.81 & 39.64 & 32.49 & 35.92 \\
0.8 & 60.09 & 58.21 & 58.80 & 42.20 & 43.42 & 39.15 & 45.75 & 39.57 & 32.25 & 35.77 \\
0.9 & 60.43 & 59.50 & 57.59 & 41.52 & 41.90 & 38.84 & 46.49 & 39.61 & 32.19 & 36.44 \\
\hline
\end{tabular}
\label{alphaab}
\end{table}


\paragraph{Explicit $\textit{vs.}$ Implicit Hierarchical Constraints.}
Tab.~\ref{hierarchicalconstrain} investigates whether explicitly adding hierarchical losses helps. In the contrastive loss function (Eq.~\ref{constrastive}), similarity only involves the expression and the anchor vision feature, but we want to know whether it learns the whole hierarchy implicitly. To validate this, we add an extra explicit hierarchical constraint loss as follows:
\begin{align*}
\mathcal{L}_{\text{hier}} = d_{\kappa}(f_{cat}, f_{base\_ref}) + d_{\kappa}(f_{ref}, f_{base\_ref})
\end{align*}
where \(f_{cat}\) denotes the feature of the category text (e.g., `person' as shown in \ref{heratical}), \(f_{base\_ref}\) denotes the feature of the raw referring expression (e.g., `left person') and \(f_{ref}\) denotes the feature of the decomposed referring expression.
As shown in Tab.~\ref{hierarchicalconstrain}, adding the explicit hierarchical constraint basically has the same average performance, implying that through hyperbolic similarity, the model is able to learn Hierarchical Constraints Implicitly.
\paragraph{Training dataset with/without v=0} The Referent Grounding stage is based on the problem definition and optimization of RefClIP which do not support to handle v=0 samples: $a^* = \arg\max_{a \in \mathcal{A}} \phi(T, I, a)$. 
RefCLIP is based on the image-ref pair for contrastive loss, which the process of performing contrastive loss requires the inclusion of a positive sample; when v=0, there is no positive sample available for training the model, which will only damage the model performance. Therefore, in LIHE training stage, we directly remove the v=0 samples from Referent Grounding and in LIHE inference stage, v=0 samples are effectively addressed in the Referential Decoupling stage (Stage 1), where the model outputs ``0" to indicate no detections.

\begin{table}[t]
\centering
\caption{RefCLIP trained with/without $v=0$ sample}
\label{tab:refclip_v0}
\begin{tabular}{lccc}
\toprule
\textbf{RefCLIP trained} & \multicolumn{3}{c}{\textbf{WGREC}} \\
 & val & testA & testB \\
\midrule
With $v=0$    & 16.77 & 15.78 & 20.53 \\
Without $v=0$ & 17.85 & 18.23 & 21.89 \\
\bottomrule
\end{tabular}
\end{table}

\begin{table}[t]
\centering
\caption{Ablation on explicit hierarchical constraints on gRefCOCO.}
\setlength{\tabcolsep}{1.5mm}
\renewcommand{\arraystretch}{1.1}
\begin{tabular}{c|c|c|ccc}
\hline
\multirow{2}{*}{\(Sim_E\)} & \multirow{2}{*}{\(Sim_H\)} & \multirow{2}{*}{\makecell{Hierarchica\\ Constraint}} & \multicolumn{3}{c}{gRefCOCO} \\
 & &  & val & testA & testB \\
\hline
 \checkmark    &               &               & 38.88	& 31.77	 &	34.89   \\
               &  \checkmark   &               & 39.58	 & 31.57  &	 36.88   \\
\rowcolor{gray!15}
\checkmark     &  \checkmark   &               & 39.61  & 32.70  &  35.84   \\
\checkmark     &               & \checkmark    & 39.08  & 32.33  &	35.17 \\
               & \checkmark    & \checkmark    & 39.25  & 32.06  &	 36.03   \\
\checkmark     & \checkmark    & \checkmark    & 39.37	& 32.44	 &  36.62    \\
\hline
\end{tabular}
\label{hierarchicalconstrain}
\end{table}

\vspace{-3mm}
\section{More Visualizations}
In this section, we provide additional qualitative visualizations to further elucidate the behavior of the LIHE framework in various complex scenarios. As shown in Fig.~\ref{visualize0}, Fig.~\ref{visualize1}, Fig.~\ref{visualize2}, Fig.~\ref{visualize3}, Fig.~\ref{visualize4}, we demonstrate how the model successfully performs referential decoupling (RD) on the input referring expressions (RE) to accurately identify multi-target, single-target, or no-target situations. These diverse cases intuitively demonstrate the effectiveness and generalization capability of our proposed method.

\begin{figure*}
    \centering
    \setlength{\abovecaptionskip}{6pt}   
    \setlength{\belowcaptionskip}{1pt}  
    \includegraphics[width=1.0\linewidth]{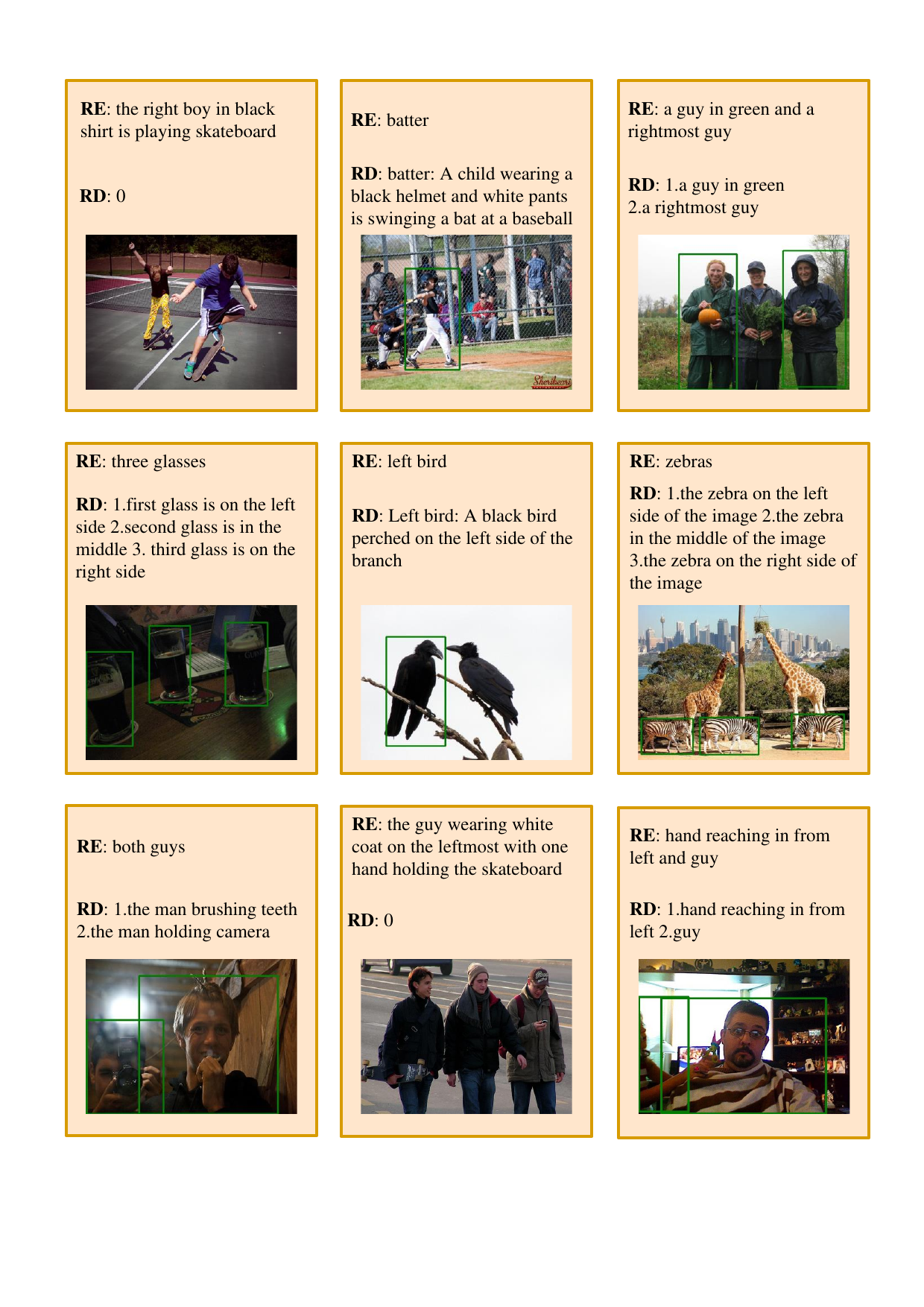}
    \caption{Qualitative visualizations of LIHE. RE denotes the original Referring Expression, and RD denotes the result of Referential Decoupling. The image is the result of  Referent Grounding.}
    \label{visualize0}
\end{figure*}
\begin{figure*}
    \centering
    \setlength{\abovecaptionskip}{6pt}   
    \setlength{\belowcaptionskip}{1pt}  
    \includegraphics[width=1.0\linewidth]{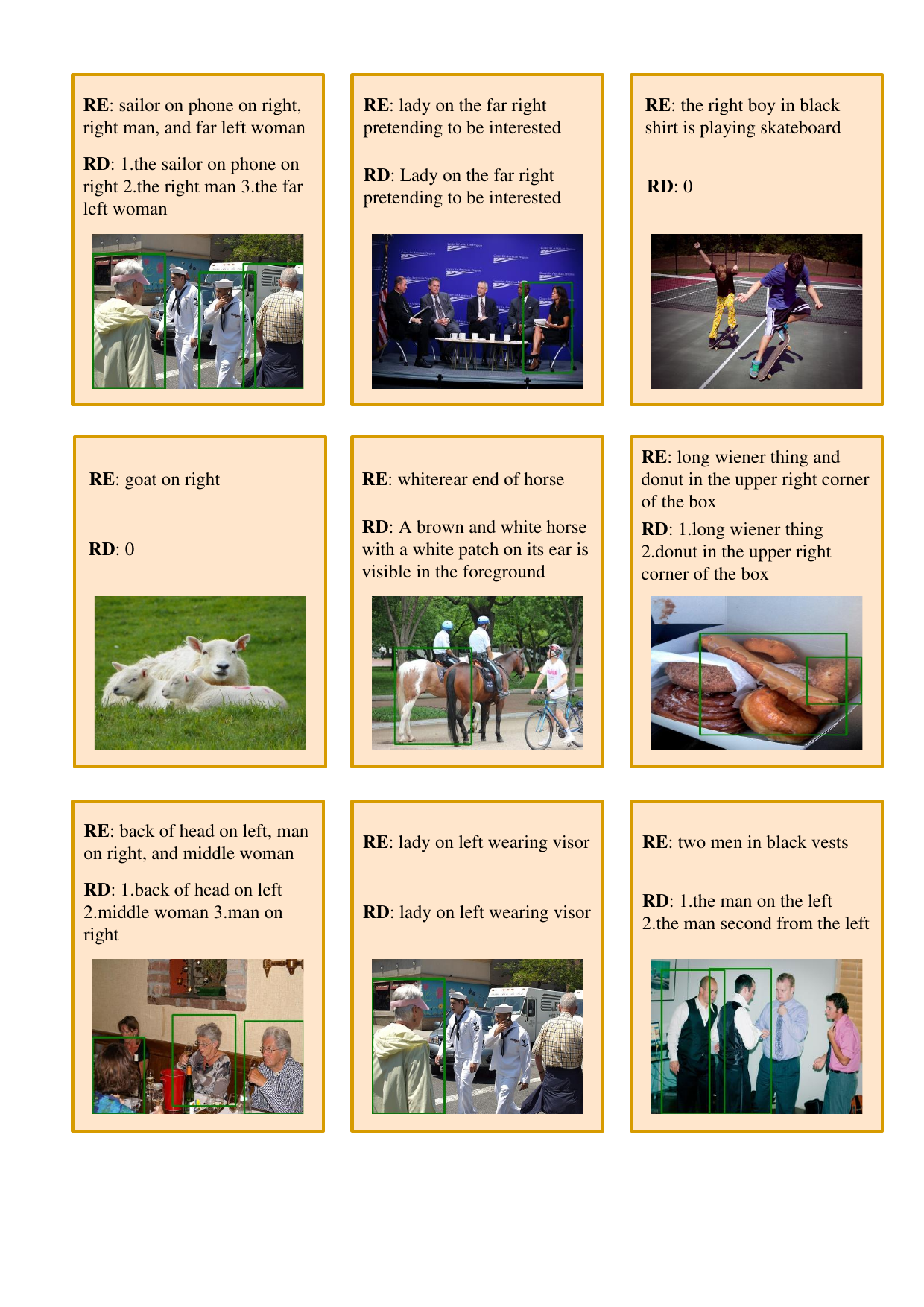}
    \caption{Qualitative visualizations of LIHE. RE denotes the original Referring Expression, and RD denotes the result of Referential Decoupling. The image is the result of  Referent Grounding.}
    
    \label{visualize1}
\end{figure*}
\begin{figure*}
    \centering
    \setlength{\abovecaptionskip}{6pt}   
    \setlength{\belowcaptionskip}{1pt}  
    \includegraphics[width=1.0\linewidth]{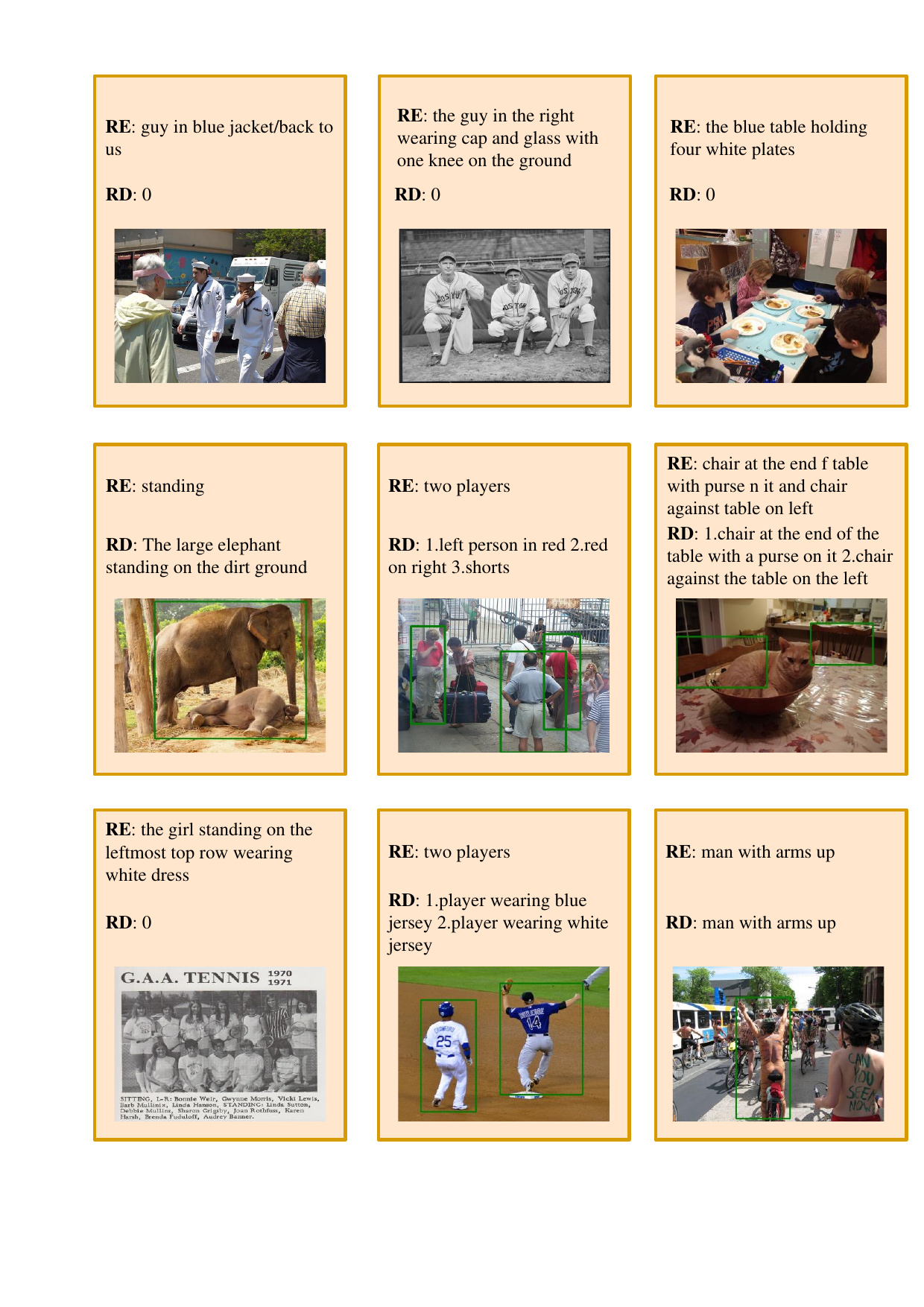}
    \caption{Qualitative visualizations of LIHE. RE denotes the original Referring Expression, and RD denotes the result of Referential Decoupling. The image is the result of  Referent Grounding.}
    \label{visualize2}
\end{figure*}
\begin{figure*}
    \centering
    \setlength{\abovecaptionskip}{6pt}   
    \setlength{\belowcaptionskip}{1pt}  
    \includegraphics[width=1.0\linewidth]{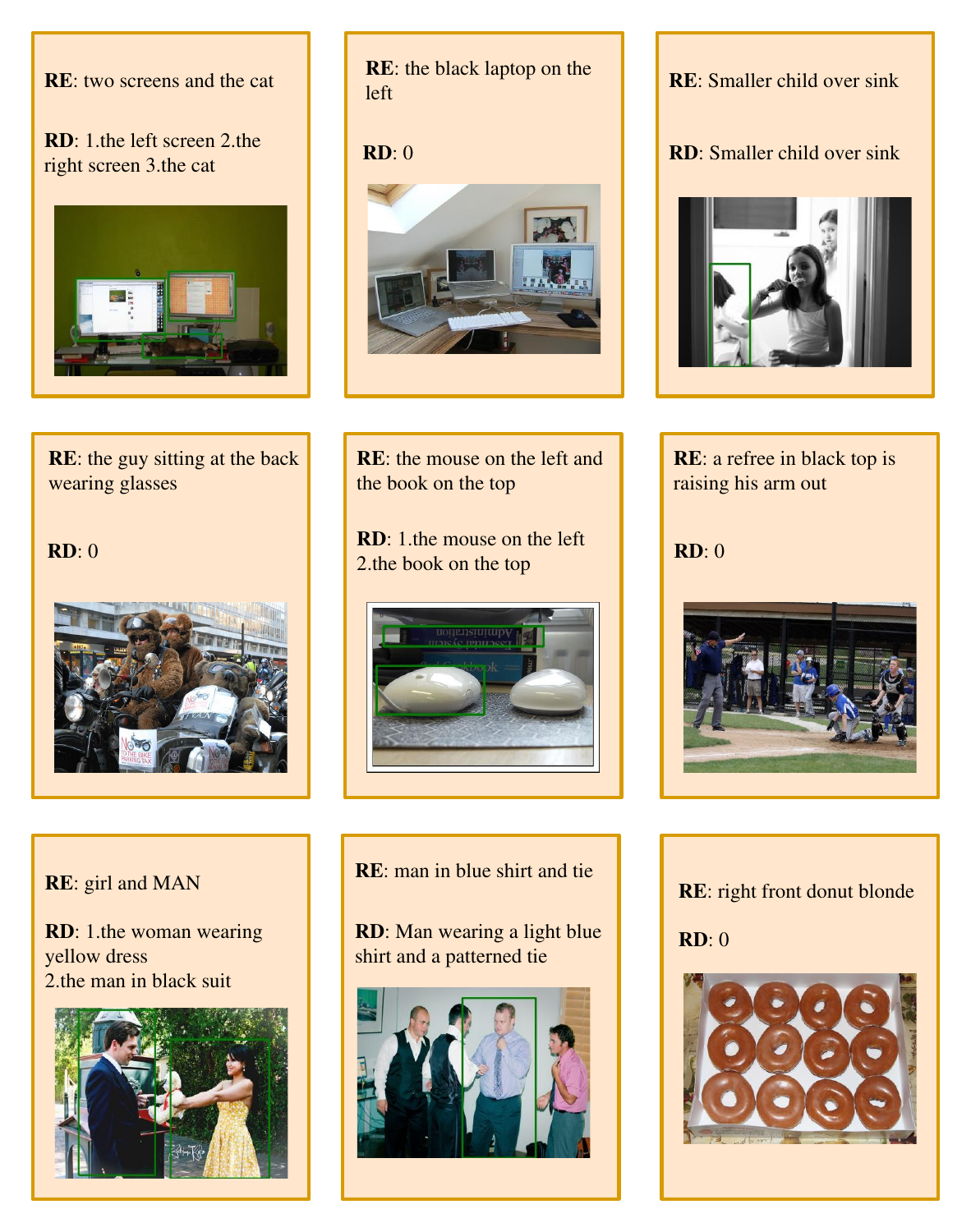}
    \caption{Qualitative visualizations of LIHE. RE denotes the original Referring Expression, and RD denotes the result of Referential Decoupling. The image is the result of  Referent Grounding.}
    \label{visualize3}
\end{figure*}
\begin{figure*}
    \centering
    \setlength{\abovecaptionskip}{6pt}   
    \setlength{\belowcaptionskip}{1pt}  
    \includegraphics[width=1.0\linewidth]{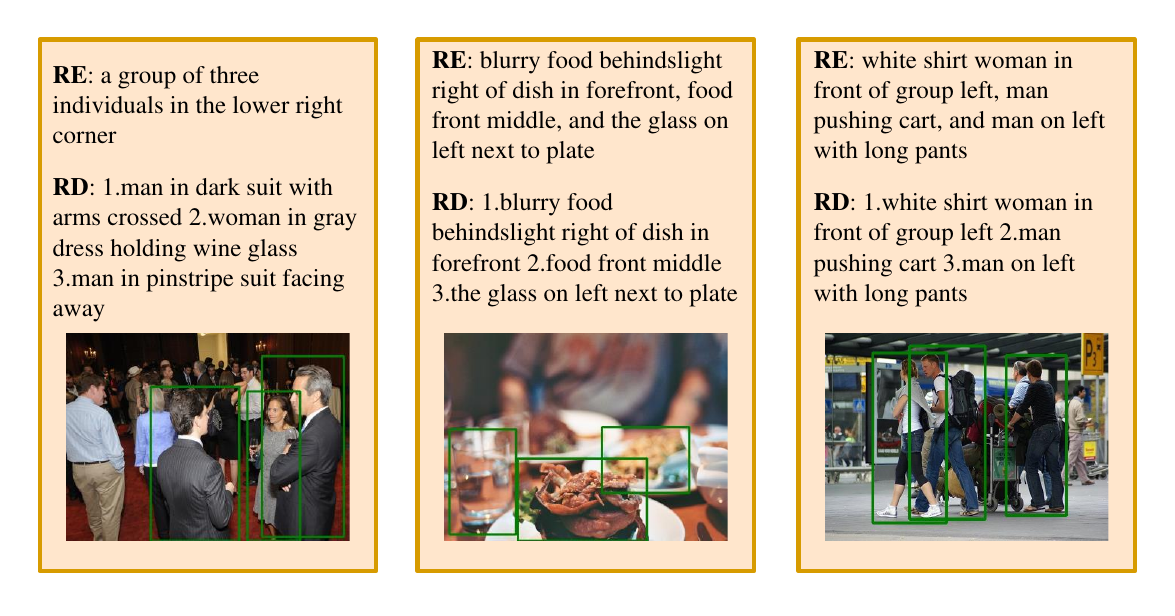}
    \caption{Qualitative visualizations of LIHE. RE denotes the original Referring Expression, and RD denotes the result of Referential Decoupling. The image is the result of  Referent Grounding.}
    \label{visualize4}
\end{figure*}

\end{document}